\documentclass{article}

\usepackage{amsmath}
\usepackage{amssymb}
\usepackage{amsfonts}
\usepackage{amsthm}
\usepackage{mathtools}
\usepackage{bm}
\usepackage{bbm}
\usepackage{dsfont}
\usepackage{cases}
\usepackage{nicefrac}
\usepackage{MnSymbol}
\usepackage{bbding}
\usepackage{pifont}

\usepackage[utf8]{inputenc} 
\usepackage[T1]{fontenc}    
\usepackage{textcomp}

\usepackage{graphicx}
\usepackage{wrapfig}
\usepackage{float}
\usepackage{caption}
\usepackage{subcaption}     
\usepackage{array}
\usepackage{multirow}
\usepackage{multicol}
\usepackage{booktabs}
\usepackage{makecell}
\usepackage{colortbl}

\usepackage{xr-hyper}
\usepackage{hyperref}       
\usepackage[capitalize,noabbrev]{cleveref}
\usepackage{url}

\usepackage[makeroom]{cancel}

\hypersetup{
  final,
  colorlinks,
  linkcolor=ourred,
  citecolor=ourblue,
  urlcolor=url,
}

\usepackage{algorithm}
\usepackage{algpseudocode}

\usepackage[page,toc,titletoc,title]{appendix}
\usepackage{microtype}
\usepackage{xargs}
\usepackage{xspace}
\usepackage[normalem]{ulem}
\usepackage{soul}
\usepackage{fancyvrb}

\usepackage{lipsum}
\usepackage{blindtext}

\usepackage{xcolor}
\definecolor{ourblue}{rgb}{0.368,0.507,0.71}
\definecolor{ourorange}{rgb}{0.881,0.611,0.142}
\definecolor{ourgreen}{rgb}{0.56,0.692,0.195}
\definecolor{ourgreen2}{rgb}{0.46,0.792,0.195}
\definecolor{ourred}{rgb}{0.923,0.386,0.209}
\definecolor{ourviolet}{rgb}{0.528,0.471,0.701}
\definecolor{ourbrown}{rgb}{0.772,0.432,0.102}
\definecolor{ourlightblue}{rgb}{0.364,0.619,0.782}
\definecolor{ourbrightblue}{RGB}{93,135,237}
\definecolor{ourdarkgreen}{rgb}{0.572,0.586,0.}

\definecolor{ourcyan2}{rgb}{0.125,0.722,0.804}
\definecolor{ourred2}{rgb}{0.863,0.184,0.047}
\definecolor{ouryellow2}{cmyk}{0,0.16,1.0,0.07}
\definecolor{ourviolet2}{cmyk}{0.55,0.56,0,0.47}
\definecolor{ourorange2}{cmyk}{0,0.46,0.89,0.11}
\definecolor{grayseq}{RGB}{120,120,120}

\definecolor{discretecolor}{RGB}{11,83,150}
\definecolor{gaussiancolor}{RGB}{230,145,56}
\definecolor{argmaxcolor}{RGB}{154,0,0}

\definecolor{customred}{RGB}{169, 45, 59}
\definecolor{url}{HTML}{d95225}

\definecolor{olivegreen}{RGB}{165,185,115}
\definecolor{olivegreendark}{RGB}{145,165,95}

\definecolor{maroon}{RGB}{140, 45, 45}

\usepackage[most]{tcolorbox} 
\usepackage{soul}

\makeatletter
\DeclareRobustCommand\onedot{\futurelet\@let@token\@onedot}
\def\@onedot{\ifx\@let@token.\else.\null\fi\xspace}

\makeatother

\newtcolorbox{corollarybox}{
  colback=ourbrightblue!12,
  colframe=ourbrightblue!80!black,
  left=3.5pt,
  right=3.5pt,
  top=3pt,
  bottom=2pt
}

\newtcolorbox{findingbox}{
  breakable,
  enhanced,
  colback=olivegreen!12,
  colframe=olivegreen!70!black,
  left=2pt,
  right=2pt,
  top=2pt,
  bottom=2pt,
  boxrule=1.5pt
}

\usepackage{amsmath,amsfonts,bm}

\def\eqref#1{equation~\ref{#1}}

\def\1{\bm{1}}

\DeclareMathAlphabet{\mathsfit}{\encodingdefault}{\sfdefault}{m}{sl}
\SetMathAlphabet{\mathsfit}{bold}{\encodingdefault}{\sfdefault}{bx}{n}

\def\x{{\mathbf x}}
\def\z{{\mathbf z}}

\def\h{{\mathbf h}}

\def\cat{\text{Cat}}
\def\x{{\mathbf x}}
\def\h{{\mathbf h}}

\def\xi{{\mathbf x}^{(i)}}

\def\m{{\mathbf m}}

\def\at{{\alpha_{t}}}

\newcommand{\onehotset}{\mathcal{V}}

\def\Nc{N_\mathrm{cont}}
\def\Nd{N_\mathrm{disc}}

\def\supl{\ell}
\def\xtl{\x_t^{\supl}}

\def\u{\mathbf{u}}
\def\v{\mathbf{v}}

\def\xts{\x_s^{\supl}}
\def\xsl{\x_s^{\supl}}

\def\xl{\x^{\supl}}

\def\xi{\x^{\supi}}

\def\pmaskbert{p_{\text{mask}}^{\tilde{\x}}}
\def\sigmaregbert{\sigma_\text{reg}^{\tilde{\x}}}
\def\pmaskz{p_{\text{mask}}^{\z}}
\def\pdropoutz{p_{\text{dropout}}^{\z}}

\usepackage{fancyvrb}
\usepackage{pythonhighlight}

\usepackage[preprint]{neurips_2026}

\usepackage[utf8]{inputenc} 
\usepackage[T1]{fontenc}    
\usepackage{hyperref}       
\usepackage{url}            
\usepackage{booktabs}       
\usepackage{amsfonts}       
\usepackage{nicefrac}       
\usepackage{microtype}      
\usepackage{xcolor}         

\usepackage{microtype}
\usepackage{graphicx}
\usepackage{booktabs} 
\usepackage{makecell}

\definecolor{bluish}{HTML}{2955BE}
\definecolor{nvgreen}{HTML}{89CC16}
\definecolor{nvblue}{HTML}{176CBD}

\title{DiLaDiff: Distilled Latent-Augmented Diffusion for Language Modeling}

\author{%
  Jean-Marie Lemercier\thanks{Corresponding author: \texttt{jlemercier@nvidia.com}} \\
  NVIDIA\\
  \And
  Tomas Geffner \\
  NVIDIA \\
  \And
  Morteza Mardani \\
  NVIDIA \\
  \And
  Karsten Kreis \\
  NVIDIA \\
  \And
  Arash Vahdat \\
  NVIDIA \\
  \And
  Ante Juki\'{c} \\
  NVIDIA
}

\begin{document}

\maketitle

\begin{abstract}
    Diffusion language models intrinsically fail to capture correlations between decoded tokens, which leads to a harsh trade-off between sampling quality and throughput.
    To solve this issue, we propose \textbf{DiLaDiff}, a variant of masked diffusion language models with three components: (\textit{1}) a continuous latent space with semantic capabilities, learned by an auto-encoder fine-tuned from an existing masked diffusion language model; (\textit{2}) a latent diffusion model learning the prior over the encoder distribution; (\textit{3}) a consistency model distilling the learned prior into a few-step latent generative model.
    We show that, even without distillation, our latent-guided diffusion model outperforms the masked diffusion baseline while significantly accelerating inference.
    Consistency distillation further lowers the computational overhead of continuous diffusion, such that the latent is generated in negligible time compared to discrete decoding\footnote{Project page: \texttt{https://jmlemercier.github.io/diladiff/}}.
\end{abstract}

\section{Introduction}

Diffusion-based generative models~\citep{sohl2015deep, song2019generative, ho2020denoising, song2021scorebasedgenerativemodelingstochastic} have emerged as a dominant framework for modeling continuous data such as natural images~\citep{dhariwal2021diffusion}, videos~\citep{ho2022video}, speech \citep{popov2021gradttsdiffusionprobabilisticmodel} or protein design~\citep{watson2023}. 
Continuous diffusion paradigms offer many benefits, such as principled samplers through differential equation solvers as well as distillation techniques \citep{song2023consistencymodels, geng2024consistencymodelseasy, geng2025meanflowsonestepgenerative}.
Building on this success, recent works have explored extending diffusion models to categorical data~\citep{austin2023structureddenoisingdiffusionmodels,campbell2022continuoustimeframeworkdiscrete,lou2024discretediffusionmodelingestimating,gat2024discreteflowmatching,sahoo2024simpleeffectivemaskeddiffusion}.
While this formulation is well-suited to discrete data, it forgoes the aforementioned advantages of continuous latent representations.
More crucially, discrete diffusion models cannot decode many tokens in parallel accurately, since the denoiser model parameterizing the posterior marginals does not capture token dependencies~\citep{israel2025acceleratingdiffusionllmsadaptive,liu2025discretecopuladiffusion,xie2025variationalautoencodingdiscretediffusion}.
Another line of works consider continuous diffusion on text representations, such as
one-hot encodings~\citep{lee2026flowmaplanguagemodels}, token-wise embeddings~\citep{li2022diffusionlm, gulrajani2023likelihoodbaseddiffusionlanguagemodels, chen2026langflowcontinuousdiffusionrivals} and contextual representations \citep{meshchaninov2026cosmoscompressedsmoothlatent}, thereby recovering principled sampling and distillation techniques.
Despite these advantages, continuous models have been lagging behind their discrete counterparts.
This has been mostly attributed to the intrinsic categorical nature of language, and also to the trade-off between expressivity and decodability of the continuous representation of text sequences~\citep{zhou2025coevolutionarycontinuousdiscretediffusion}.
On one end of the trade-off, token embeddings are easily decodable but do not extract contextual information.
On the other end, contextual encodings, e.g., the output states of deep layers in a representation model \citep{zhou2025coevolutionarycontinuousdiscretediffusion,meshchaninov2026cosmoscompressedsmoothlatent,shariatian2025latentdiscretediffusionmodels,kang2025ladirlatentdiffusionenhances}, provide an expressive latent but are notoriously harder to decode back to the original sequence. 
\citet{zhou2025coevolutionarycontinuousdiscretediffusion,morris2023textembeddingsrevealalmost} suggest using a diffusion process to simplify the task of decoding the contextual latent. 

Based on these insights, we propose a model combining a continuous latent prior over compressed contextual representations, and a token-space discrete decoder. 
The model captures global correlations via continuous diffusion over contextual representations while benefiting from robust decoding in the token space.
In addition, (\textit{1}) the decoder can be initialized from a robust pre-trained discrete diffusion model; (\textit{2}) controllability of continuous diffusion trajectories in the latent space can be exploited; (\textit{3}) ODE trajectories in the latent space can be distilled and continuous paradigms can be used to improve sampling.
Our contributions in this work are as follows:
\begin{itemize}
    \item We propose a recipe for training a text auto-encoder, where the decoder is initialized from an pre-trained discrete diffusion baseline. This results in a controllable and regularized latent space capturing semantic information.
    
    \item We learn the latent prior with a continuous diffusion model, yielding a hybrid continuous-discrete diffusion model suited for sampling several tokens in parallel without sacrificing text quality. 
    The resulting model \textbf{LaDiff} (\textit{\textbf{La}tent-augmented \textbf{Diff}usion Language Model}) largely outperforms the masked diffusion baseline in unconditional generation, while accelerating inference by a factor of $7\times$ at batch size 32.
    
    \item We leverage the properties of continuous diffusion to further distill the latent diffusion component into a few-step generative model, resulting in \textbf{DiLaDiff} (\textit{\textbf{Di}stilled \textbf{La}tent-augmented \textbf{Diff}usion Language Model)}. 
    This is the first successful attempt at distilling ODE trajectories for contextual text latents.
    DiLaDiff obtains in only 5 steps a performance close to its LaDiff teacher using 200 steps.
    Distillation lowers the overhead of continuous diffusion, such that the latent variable is sampled in negligible time compared to discrete decoding.
    
\end{itemize}

\begin{figure}[t]
    \centering
    \includegraphics[trim={60 300 190 280},clip,width=0.9\linewidth]{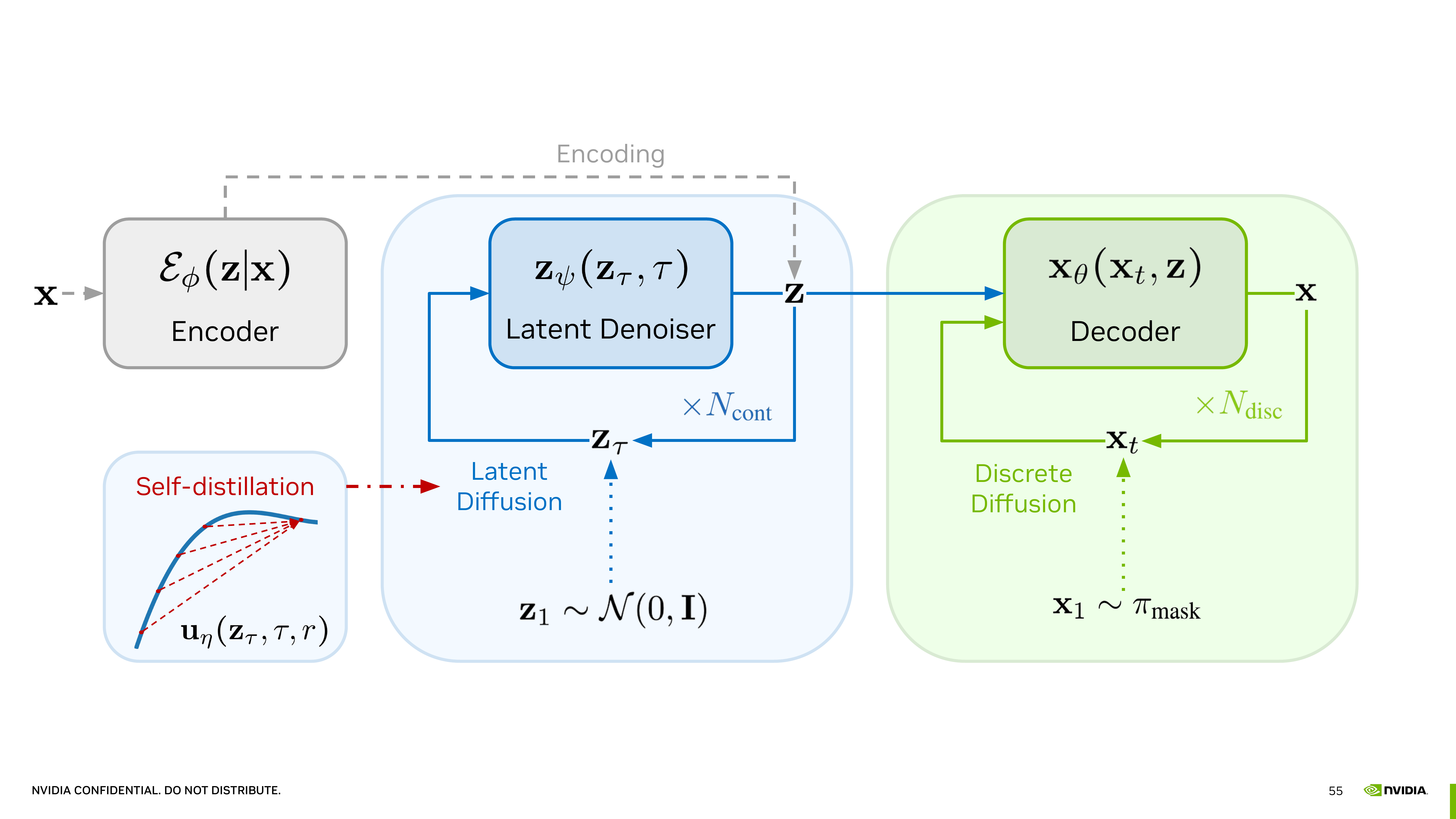}
    \caption{DiLaDiff: hybrid continuous-discrete diffusion with self-distilled latent. The latent space is crafted with encoder $\mathcal{E}_\phi$ and decoder $\x_\theta$ and learned a posteriori with a diffusion process with denoiser $\z_\psi$. The latent diffusion trajectories are further self-distilled with MeanFlow student $\u_\eta(\z_\tau, \tau, r)$.}
    \label{fig:diladiff}
\end{figure}

\section{Background}

We represent language tokens as one-hot encoded vectors in $ \onehotset \;\triangleq\; \left\{\mathbf{v}\in\{0,1\}^K:\sum_{i=1}^K v_i = 1\right\}$ where $K$ is the size of the vocabulary, and $v_i$ is the $i$-th element of the vector $\mathbf{v}$. 
We use $\x \in \onehotset^L$ to denote a sequence of $L$ tokens and $\xl$ is the corresponding token at the index $\ell$.




\subsection{Masked Diffusion Language Models}
\label{sec:mdlm}

Masked diffusion language models (MDLMs) are a variant of discrete diffusion models that has emerged as a simple yet powerful paradigm for language modeling.
These models learn by recovering tokens that were randomly masked in a text token sequence, using the unmasked tokens as context.
MDLMs are trained using an evidence lower-bound (ELBO) on the model likelihood:
\begin{equation}\label{eq:difflm}
\mathcal{L}_\mathrm{MDLM}^\theta =
\mathbb{E}_{\substack{t \sim \mathcal{U}[0,1] \\ \x_t \sim q_t( \x_t | \x )}} \frac{\dot{\alpha}_t}{1 - \alpha_t}
\sum_{\ell} \log \langle \xl_{\theta}(\x_t), \xl \rangle \geq - \log p_\theta(\x) \,,
\end{equation}
where $(\x_t)_{t \in [0,1]} \in \onehotset^L$ is a sequence with some of the tokens in $\x$ replaced by a special \texttt{[MASK]} token \citep{sahoo2024simpleeffectivemaskeddiffusion}, and $\x_\theta(\x_t)$ is the clean sequence estimate obtained from $\x_t$ by a denoising Transformer with parameters $\theta$.
The forward masking kernel $q_t( \mathbf{ \cdot } | \x )$ interpolates between the clean distribution at $t=0$ and a categorical prior $\mathbf{m}$ generating a fully masked sequence at $t=1$. This kernel is factorized independently for each token, with the marginal distribution for the $\ell$-th token:
\begin{equation}\label{eq:mdm-forward}
    q_t(\xtl|\xl) = \cat \left(\xtl; \at \xl + (1 - \at)\m\right) \, ,
\end{equation}
where $\mathbf{m}$ is a one-hot encoding of the \texttt{[MASK]} token.
The proportion of masked tokens at each time step $t$ is given by the commonly used linear noise schedule $\at = 1 - t$.
Samples are obtained through the reverse posterior $q_{s\mid t}(\x_s\mid \x_t)$, which can be marginalized as:
\begin{equation}\label{eq:posterior}
    q_{s\mid t}(\x_s\mid \x_t) = \int q_{s\mid t}(\x_s\mid \x_t, \x_0) q(\x_0 | \x_t) \mathrm{d}\x_0 \,.
\end{equation}
This posterior is intractable at inference time since we have no access to the true denoiser $q(\x_0 | \x_t)$.
Using a DDPM-style parameterization \citep{ho2020denoising}, MDLMs approximate the conditional denoiser $q(\x_0 | \x_t)$ as a product of token-wise marginals, using the denoising Transformer $\x_\theta$:
\begin{equation}\label{eq:ddpm}
    q^\theta(\x_0 | \x_t) := \prod_\ell \delta\left(\x_0^\ell = \x_\theta^\ell(\x_t) \right) \,,
\end{equation}
which, in turn, yields the following approximation for the posterior:
\begin{equation} \label{eq:mdm-post-factorization-approx}
    q_{s\mid t}^\theta(\x_s\mid \x_t) := \prod_\ell q_{s\mid t}\left(\xsl\mid \xtl, \x_0^\ell = \x_\theta^\ell(\x_t) \right) \,,
\end{equation}
with marginals \citep{sahoo2024simpleeffectivemaskeddiffusion}
\begin{equation}\label{eq:mdm-post-marginal-approx}
q_{s\mid t}(\xts|\xtl, \x_0^\ell = \x_\theta^\ell(\x_t)) :=
    \begin{cases}
        \cat (\xts; \xtl) & \xtl \neq \m, \\
        \cat \left(\xts; \frac{
        (1 - \alpha_s)\m + (\alpha_s - \alpha_t)\x_\theta^\ell(\x_t)}{1 - \alpha_t}\right) & \xtl=\m. 
    \end{cases}
\end{equation}

In theory, diffusion models carry the promise of parallel generation: unlike their auto-regressive counterparts, they are not theoretically limited to accepting one token per model forward.
However, in practice, diffusion models do not keep their promise because of the model mismatch between the factorized posterior $q_{s\mid t}^\theta$ in~(\ref{eq:mdm-post-factorization-approx}) and the true joint model $q_{s\mid t}$ in~(\ref{eq:posterior}).
Indeed, even if $\x_\theta$ perfectly recovers $\x_0$, the two posteriors are only equivalent if 
the denoiser marginals $q(\x_0^\ell | \x_t)$
are independent for all timesteps. 
This assumption is not true in practice, since tokens are heavily correlated in natural language, even when some clean (unmasked) context is available in $\x_t$.
The gap between the joint posterior and its factorized approximation is especially large toward the beginning of generation where the clean context in $\x_t$ is scarce.
This results in a trade-off between sampling quality and throughput \cite{nie2025largelanguagediffusionmodels}, as sampling more than one token per model call inevitably produces incoherent sentences.

\subsection{Continuous Latent Space for Text}
\label{sec:continuous}
Another paradigm for diffusion-based language modelling is to embed the discrete text tokens into a continuous space
, e.g., using auto-encoding, pre-trained contextual representations or one-hot embedding, and to perform diffusion in the latent space.
Latent samples $\z$ are diffused using a time-dependent perturbation kernel $\tilde{q}_t( \z_t | \z)$, and passed to a diffusion denoiser $\z_\psi$ trained with the following ELBO:
\begin{equation}\label{eq:gaussian_diffusion}
    \mathcal{L}_\mathrm{LDM}^\gamma = 
    \mathbb{E}_{\z \sim \mathcal{E} (\z | \x) }
    \mathbb{E}_{\substack{ t \sim \mathcal{U}[0, 1] \\ \z_t \sim \tilde{q}_t(\z_t | \z) }}
    || \z_\psi(\z_t, t) - \z ||_2^2 \, .
\end{equation}
At inference, a latent $\z$ is obtained by reverse diffusion,
then passed to the text decoder, e.g., learned decoder in the case of auto-encoding, or \texttt{argmax} operator in the case of one-hot embeddings.
This formulation enjoys many properties of continuous diffusion, in particular the ability to distill the resulting probability-flow ODE trajectory \citep{song2023consistencymodels,geng2025meanflowsonestepgenerative}.

\begin{figure}
    \centering
    \includegraphics[width=\linewidth]{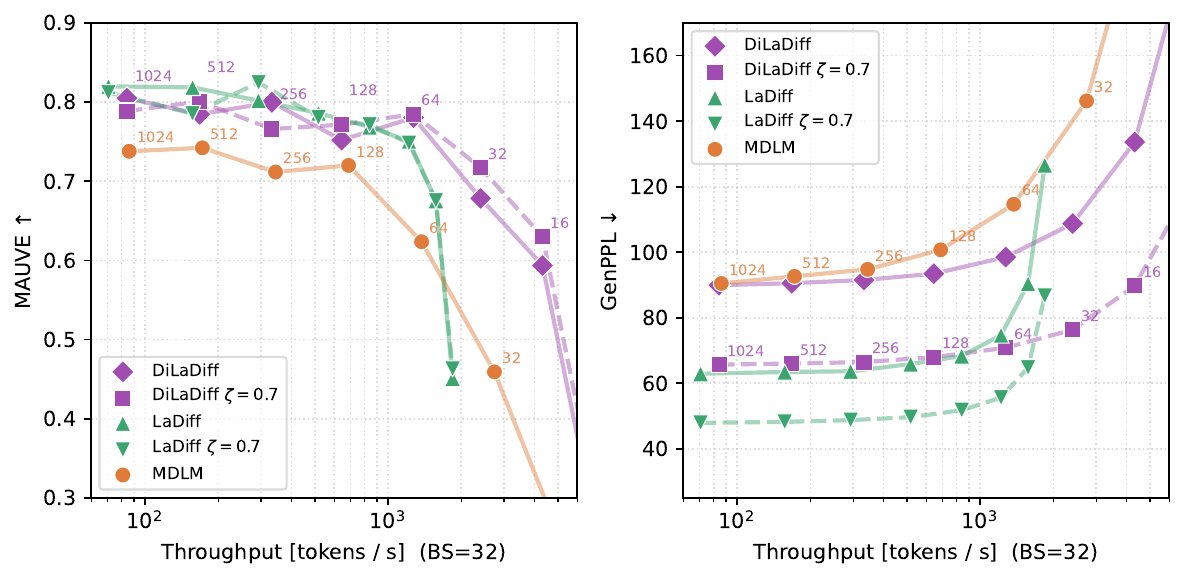}
    \caption{Speed-Quality Pareto frontier, for batch size $\text{BS}=32$. Labels next to points denote the number of discrete decoding steps $\Nd$. $\zeta$ is the temperature for scaling the discrete decoder's logits.}
    \label{fig:pareto32}
\end{figure}

\section{Latent-Augmented Diffusion Language Models}
\label{sec:vdlm}

\subsection{Auto-encoding text}

We propose to condition a masked diffusion language model by a latent learned via auto-encoding.
The training loss emerges from combining traditional auto-encoding with~(\ref{eq:difflm}):
\begin{equation}\label{eq:ae}
    \mathcal{L}_\mathrm{AE-MDLM}^{\phi, \theta} = 
    \mathbb{E}_{\substack{t \sim \mathcal{U}[0,1] \\ \x_t \sim q_t( \x_t | \x )}} 
    \frac{\dot{\alpha}_t}{1-\at}
    \mathbb{E}_{\z \sim \mathcal{E}_\phi (\z | \x) } 
\sum_{\ell} 
    \log \langle \x_\theta^\ell(\x_t, \z), \xl \rangle \,.
\end{equation}
The introduction of the latent channel $\z$ effectively turns the point-wise DDPM denoiser~(\ref{eq:ddpm}) into a denoiser conditioned on the latent $\z$:
\begin{equation}\label{eq:denoiser}
    q^\theta(\x_0|\x_t) :=  \int \delta\left(\x_0 = \x_\theta(\x_t, \z) \right) p(\z) \mathrm{d}\z = \int \Big( \prod_\ell \delta\left(\x_0^\ell = \x_\theta^\ell(\x_t, \z) \right) \Big) p(\z) \mathrm{d}\z \,,
\end{equation}
where $p(\z)$ is the latent prior. This yields in turn the following posterior:
\begin{equation}\label{eq:lmdm-rev-full}
    q_{s\mid t}^{\theta}(\x_s|\x_t) := \int \Big( \prod_\ell  q_{s\mid t}\left(\xsl|\xtl, \x_0^\ell = \x_\theta^\ell(\x_t, \z) \right) \big) p(\z) \mathrm{d}\z \,,
\end{equation}
Contrarily to (\ref{eq:mdm-post-factorization-approx}), the posterior (\ref{eq:lmdm-rev-full}) truly factorizes over independent tokens \textit{conditioned on the latent}, since the latent $\z$ captures the token correlations in $\x_0$.
In particular, the token correlations captured by the latent $\z$ are crucial for high masking ratios $t$, i.e., when clean context is scarce, since this is where the mismatch between the joint denoiser $q(\x_0|\x_t)$ and its marginals is the largest (see Figure~\ref{fig:reconstruction}).
The full proof is in Appendix~\ref{sec:proof}.

\paragraph{Latent space regularization} \label{sec:reg}

We base this work on \citet{meshchaninov2026cosmoscompressedsmoothlatent}, where pre-trained BERT hidden states are passed as input features to a Perceiver-inspired encoder \citep{jaegle2021perceivergeneralperceptioniterative} which further compresses the resulting text representation.
Such representations cannot be directly learned via diffusion because of the fundamental trade-off between representation capacity and smoothness, well documented in the auto-encoding literature \citep{kingma2014autoencodingvariationalbayes}.
Solely optimizing for reconstruction naturally pushes latents away from each other, resulting in a latent space too sparse for a generative model to learn.
We therefore employ the same robustness-enforcing heuristics as \citet{meshchaninov2026cosmoscompressedsmoothlatent}:  (\textit{1}) coordinate-wise normalization of BERT features $\tilde{\x}$ and latents $\z$; (\textit{2}) random masking of BERT features with probability $\pmaskbert$, and similarly for latents with probability $\pmaskz$; (\textit{3}) injection of Gaussian noise into BERT features, with standard deviation $\sigmaregbert$.
Finally, we further regularize training by employing a pre-trained MDLM decoder, and replacing the latent channel altogether with normally distributed noise, with chance $\pdropoutz$.
More details and pseudo-code for auto-encoder training can be found in Section~\ref{sec:exp_training} and Appendix~\ref{sec:details_ae}.

\subsection{Modelling text latents}

The clean sequence is not available at inference, and therefore we need to learn the latent prior $p(\z)$.
We propose to use the diffusion framework presented in Section~\ref{sec:continuous} for learning text latents produced by our auto-encoder, i.e., learn a parametric prior $p_\gamma(\z) \approx \mathbb{E}_{\x} \left[ p_\phi(\z|\x) \right]$.
As opposed to \citet{xie2025variationalautoencodingdiscretediffusion} where $\z$ is directly sampled from an isotropic normal distribution, our model can capture complex dependencies using a learned latent prior, and therefore provide better guidance to the decoder.
The proposed \textbf{LaDiff} model (\textbf{La}tent-augmented \textbf{Diff}usion Language Model) offers a natural 
interpolation scheme between pure continuous diffusion in the latent space and discrete mask-based diffusion in the token space.
Pseudo-code and training details of the proposed model are listed in Appendix~\ref{sec:details_ldm}.

At inference time, we first sample a latent sample $\z$ by solving the reverse probability-flow ODE using our latent denoiser $\z_\psi$ (trained with (\ref{eq:gaussian_diffusion})) and time discretization $\{\tau_{0 \leq m \leq \Nc} \}$, where $\Nc$ is the number of continuous diffusion steps. Updates have the following form:
\begin{equation}
    \forall m = \Nc, \dots, 1 : \quad \z_{\tau_{m-1}} = \text{ODESolver}(\z_{\tau_{m}}, \tau_{m-1}, \tau_m, \z_\psi) \,.
\end{equation}
We then perform ancestral sampling in the token space, starting from a fully masked sequence and progressively sampling the reverse diffusion posterior~(\ref{eq:lmdm-rev-full}) with time discretization $\{t_{0 \leq n \leq \Nd} \}$, where $\Nd$ is the number of discrete diffusion steps:
\begin{equation}
    \forall n = \Nd, \dots, 1 : \quad \x_{t_{n-1}} \sim q^\theta_{ t_{n-1} \mid t_{n} } \left( \x_{t_{n-1}} | \x_{t_{n}}, \x_0 = \x_\theta^\ell(\x_{t_n}, \z_{\tau_0}) \right) \,.
\end{equation}
A complete pseudo-code for LaDiff sampling procedure is available in Algorithm~\ref{alg:ladiff-sampling}.

\subsection{Distilling latent trajectories}
\label{sec:distillation}

A key advantage of using continuous latents is the ability to distill the corresponding ODE trajectories.
In this work, we use MeanFlow \citep{geng2025meanflowsonestepgenerative} as it naturally allows multi-step generation (as opposed to consistency models).
MeanFlow tasks our student \textbf{DiLaDiff} (\textbf{Di}stilled \textbf{La}tent-augmented \textbf{Di}ffusion Language Model) with learning the \textit{average} velocity $\u(\z_t, t, r)$, i.e., the average displacement between two points $t$ and $r$ on the ODE path:
\begin{equation}
    \mathbf{u}(\z_t, t, r) := \frac{1}{t-r} \int_r^t \mathrm{d}\z_\tau \,.
\end{equation}
In the many-step regime $r \rightarrow t$, the average velocity $\u(\z_t, t, r)$ converges to the \textit{instantaneous} velocity $\mathbf{v}(\z_t, t) := \frac{\mathrm{d}\z_\tau}{\mathrm{d}\tau}|_{\tau=t}$, so they can be used interchangeably to sample from the model.
However, in the few-step regime the two velocities are not aligned anymore, and the average velocity $\mathbf{u}$ must be used in place of the instantaneous velocity $\mathbf{v}$ in order to stay on the ODE path.
MeanFlows can be trained from scratch, but we distill from an existing LaDiff teacher $\psi$, which yields the following objective:
\begin{equation}
    \mathcal{L}_\text{MeanFlow}(\eta) = \mathbb{E} || 
    \u_\eta(\z_t, t, r) - \text{stopgrad} ( \u_\text{tgt} )
    ||_2^2 \,,
\end{equation}
where the target $\u_\text{tgt}$ is obtained from the instantaneous velocity and the average velocity derivatives:
\begin{equation}
    \u_\text{tgt} = \v_\psi(\z_t, t) - (t-r) \left( \v_\psi(\z_t, t) \partial_\mathbf{z} \u_\eta + \partial_t \u_\eta \right) \,.
\end{equation}
Considering first-order Euler ODE sampling for simplicity, samples are obtained by replacing the teacher's instantaneous velocity $\mathbf{v}_\gamma$ with the student's average velocity $\mathbf{u}_\eta$:
\begin{equation}
    \forall m = \Nc \, \dots, 1 : \quad \z_{\tau_{m-1}} = \z_{\tau_m} + (\tau_{m-1}-\tau_m) \, \mathbf{u}_\eta(\z_{\tau_m}, \tau_{m}, \tau_{m-1}) \,.
\end{equation}
Complete training pseudo-code for DiLaDiff is available in Algorithm~\ref{alg:diladiff-sampling}.

\section{Experiments}\label{sec:exp}

We perform language modelling experiments on OpenWebText \citep{owt}, where the last 100K documents of the dataset are used as a held-out validation set. Sentences are packed to length $L=1024$.
We use the \texttt{bert-base-uncased} tokenizer, with a vocabulary size of $K = 30522$, thus matching the setup in \citet{meshchaninov2026cosmoscompressedsmoothlatent}.

\paragraph{Training}\label{sec:exp_training}

Discrete diffusion baselines, MDLM~\citep{sahoo2024simpleeffectivemaskeddiffusion} and DUO~\citep{sahoo2025diffusionduality}, are re-implemented and trained for 1M steps with global batch size of 512 as in \citet{sahoo2025diffusionduality}. We use a cosine-decay learning schedule with 1000 linear warm-up steps with a maximal learning rate of 3e-4.
Auto-encoders are then fine-tuned from the MDLM checkpoint for additional 200k steps with a learning rate of 5e-5.
The architecture is a modified version of \citet{meshchaninov2026cosmoscompressedsmoothlatent}. In the encoder, the latent variable $\z$ is learned via cross-attention to the hidden state $\h$. In the decoder, we insert cross-attention layers, wrapped in zero-initialized pointwise convolutional layers, to enable the decoder's hidden state to capture the latent information.
For both MDLM and the auto-encoder, we replace the ELBO weighting $\frac{\dot{\alpha}_t}{1-\at}$ by $-1$, as \citep{sahoo2026scaling} empirically demonstrate it lowers the objective variance and therefore insures better convergence.
Additional implementation details for auto-encoders can be found in Appendix~\ref{sec:details_ae}.
LaDiff is trained for 150k steps with a learning rate of 2e-4, with the auto-encoder's weights frozen. 
We use the parametric \mbox{tanh-logSNR} variance-preserving noise schedule~\citep{hoogeboom2023simplediffusionendtoenddiffusion} with a grid search for the warping parameter $d$ (see Appendix~\ref{sec:additional_ae}).
Additional implementation details for schedules can be found in Appendix~\ref{sec:details_ldm}.
DiLaDiff is self-distilled for 25k steps using frozen LaDiff teachers, with a learning rate of 5e-5 and 25\% pure flow-matching loss ($t=r$). 
Because of the modified self-conditioning mechanism in DiLaDiff, one call of DiLaDiff requires two latent denoiser NFEs. Therefore $\Nc=5$ in DiLaDiff has a computational complexity similar as $\Nc=10$ for LaDiff (the exact computational overhead is given in Table~\ref{tab:throughputs}).
More details regarding self-conditioning, training procedure and hyperparameters can be found in Appendix~\ref{sec:details_distill}.

\paragraph{Evaluation}\label{sec:exp_eval}

We solve the probability-flow ODE with first-order Euler. For few-step sampling with DiLaDiff, we use $\gamma$-sampling \citep{kim2024consistencytrajectorymodelslearning} with $\gamma=0.8$ (see Appendix~\ref{sec:additional_distill}).
We perform ancestral sampling for discrete diffusion, with a temperature of $\zeta=1$ and nucleus filtering with $p=0.9$.
We report generative perplexity (GenPPL) using GPT2-Large \citep{Radford2019LanguageMA} as the scoring model.
The GenPPL metric may be misleading in degenerate cases, e.g., when redundant text is generated~\citep{zheng2025maskeddiffusionmodelssecretly}.
Therefore, we also report token-level entropy to complement GenPPL and to detect degenerate cases.
As a semantic coherence and all-in-one metric, we also report MAUVE \citep{pillutla2021mauvemeasuringgapneural}, using the traditional \texttt{gpt2-large} contextual embeddings.

\begin{minipage}{0.48\linewidth}
    \centering
    \includegraphics[width=\linewidth]{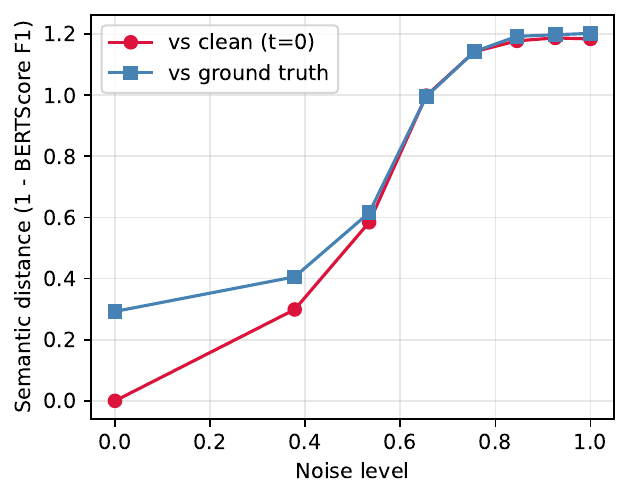}
    \captionof{figure}{Semantic distance between decoded sentence of noisy versions of the same latent.}
    \label{fig:semantic}
\end{minipage}
\hfill
\begin{minipage}{0.48\linewidth}
    \centering
    \includegraphics[width=\linewidth]{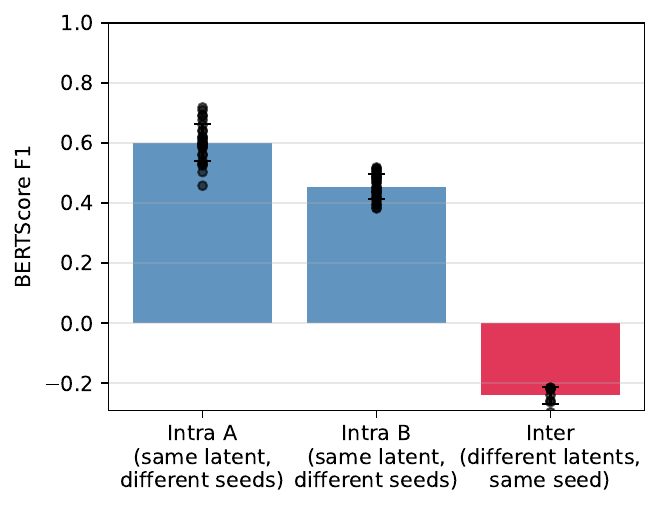}
    \captionof{figure}{Semantic similarity decoded sentences from same or different latents.}
    \label{fig:intra_inter}
\end{minipage}

\subsection{Learning text representations}\label{sec:exp_ae}

We evaluate the reconstruction performance of our auto-encoders on the held-out validation set of OpenWebText.
The performance is evaluated in terms of the token recovery rate and the corresponding lower bound on perplexity (see Appendix~\ref{sec:ppl_ae}).
As shown in Table~\ref{tab:ldm_ae}, our auto-encoders have a much lower perplexity than the baseline MDLM, which is expected as they have access to the clean sequence $\x$.
Furthermore, the perplexity and token recovery rate improve as the size of the latent space increases and reach a saturation point at around $2\times$ compression, corresponding to the latent size of $L/2 = 512$. 
The oracle sampling quality, evaluated by passing true encoder latents to the decoder, improves with latent space size.
Similarly, using little or no regularization provides the best perplexity and oracle performance (see complete ablation in Section~\ref{sec:ablations}).
However, this \textit{does not simply translate to better generative performance} when learning the encoder prior with diffusion. Indeed, increasing the latent channel capacity without adapting the regularization strength results in an ill-conditioned latent space which is hard to model with diffusion.
Using the right regularization strength, we can reach optimal sampling quality for $2\times$ compression and therefore use that setting for the remainder of the experiments.

\begin{table*}[b]
    \centering
    \begin{tabular}{l c cc cccc}
    \toprule
         & Compression & PPL ($\downarrow$) & \makecell{\%Recovery ($\uparrow$)} & \multicolumn{2}{c}{GenPPL} ($\downarrow$) & \multicolumn{2}{c}{MAUVE} ($\uparrow$) \\
         & & & & Oracle & Generated & Oracle & Generated \\
        \cmidrule(lr){5-6}\cmidrule(lr){7-8} 
         \textit{Data} & -- & -- & -- & -- & \textit{14.8} & - & \textit{1.00} \\
         \midrule
         MDLM & -- & $\leq$ 30.0 & 45.2 & -- & 90.5 & - & 0.74 \\
         \midrule
         LaDiff$^\dagger$ & $32\times$  & $\leq$ 6.83 & 53.3 & 61.2 & 120.9 & 0.86 & 0.77 \\
         LaDiff$^\dagger$ & $16\times$ & $\leq$ 6.83 & 53.3 & 48.7 & 112.5 & 0.89 & 0.80 \\
         LaDiff$^\dagger$ & $8\times$ & $\leq$ 3.28 & 61.0 & 45.9 & 80.2 & 0.89 & 0.77 \\
         LaDiff$^\dagger$ & $4\times$ & $\leq$ 2.58 & 68.3 & 38.9 & 93.9 & 0.89 & 0.74 \\
         LaDiff$^\dagger$ & $2\times$ & $\leq$ 2.25 & \textbf{75.2} & \textbf{31.4} & 85.1 & \textbf{0.90} & 0.73 \\
         LaDiff$^\dagger$ & $1\times$ & \textbf{$\leq$ 2.15} & 74.5 & 31.6 & 140.2 & \textbf{0.90} & 0.14 \\
         \midrule
         LaDiff & $2\times$ & $\leq$ 3.25 & 69.4 &  41.6 & \textbf{62.9} & 0.87 & \textbf{0.82} \\
         LaDiff & $1\times$ & $\leq$ 2.87 & 69.7 & 40.1 & 68.8 & \textbf{0.90} & 0.77 \\  
         \bottomrule
    \end{tabular}
    \caption{Validation recovery and test performance on OpenWebText. Models marked with $\dagger$ have auto-encoders trained with the MildAug regularization strategy (see Section~\ref{sec:ablations}). All models use $\Nc=200$ and $\Nd=1024$.}
    \label{tab:ldm_ae}
\end{table*}

\paragraph{Latent space analysis}

The auto-encoder latent is supposed to capture token correlations, i.e., some high-level representation of the sentence. We show here that this representation actually encodes some semantic information, and therefore that our encoder preserves some of the properties of the BERT feature during compression.
We take a sample from the encoder's posterior distribution, corrupt it with Gaussian noise with various noise levels, then decode it with $\Nd=8$ steps, keeping the decoder's stochasticity fixed. 
As reported on Figure~\ref{fig:semantic}, we observe monotonically increasing semantic distance, i.e., decreasing BERTScore-F1~\citep{zhang2020bertscoreevaluatingtextgeneration} between the decoded sentences (also, we qualitatively observed in this experiment that some words were being substituted for their synonyms). 
We also perform an analysis of the semantic diversity of our decoder when conditioned on a given latent. We sample 10 sentences from latent A with different seeds, and measure the pair-wise BERTScore-F1 within this pool of sentences. We then compare it to the pair-wise similarity between the "A" pool and a pool of sentences decoded from another latent B, using the same seed for decoding.
We observe in Figure~\ref{fig:intra_inter} that the sentences are much more closely related within a pool than between pools, but still exhibit some diversity, as shown by the deviation bars. 
These experiments suggest that the latent space carries semantic information and that the decoder has its own stochasticity, both of which properties are desirable for controllable and diverse generation.

\subsection{Hybrid continuous-discrete diffusion}\label{sec:exp_ldm}

\begin{minipage}{0.48\textwidth}
We place the performance of our hybrid models LaDiff and DiLaDiff on the speed-quality Pareto frontier
in Figure~\ref{fig:pareto32}, using a batch size $\text{BS}=32$ for generation and throughput measurement.
This enables taking into account the computational overhead incurred by the latent diffusion, for a fair comparison between pure discrete diffusion and our hybrid scheme.
In Table~\ref{tab:throughputs}, we compute the wall-time computational overhead of LaDiff's continuous diffusion component on a NVIDIA RTX 6000 Pro Blackwell GPU.
\end{minipage}\hfill
\begin{minipage}{0.5\textwidth}
\centering
\scalebox{0.9}{
\begin{tabular}{c ccc}
    \toprule
     &       \multicolumn{2}{c}{LaDiff} & DiLaDiff \\
     \cmidrule(lr){2-3}  \cmidrule(lr){4-4}
    \makecell{Batch\\size} & \makecell{$\Nc=200$ \\ $\Nd=1024$} & \makecell{$\Nc=200$ \\ $\Nd=64$} & \makecell{$\Nc=5$ \\ $\Nd=64$} \\
    \cmidrule(lr){1-4}
    1 & 0.127 & 1.894 & 0.091 \\
    8 & 0.075 & 1.175 & 0.053 \\
    32 & 0.070 & 1.116 & 0.050 \\
   \bottomrule
\end{tabular}
}
\captionof{table}{Computational overhead of latent continuous diffusion reported as a fraction of the wall-time duration of the corresponding discrete diffusion with $\Nd$ steps.}
\label{tab:throughputs}
\end{minipage}

In the performance-optimal regime using $\Nc=200$ for continuous diffusion (i.e., no distillation) and $\Nd=1024$ for discrete decoding (i.e., equal to the sequence length $L$) a 7\% overhead is allocated to continuous diffusion in the latent space (cf. Table~\ref{tab:throughputs}, column 2). This setting results in a 28.5 absolute (30\% relative) GenPPL improvement and 0.08 absolute (10\% relative) MAUVE improvement compared to the masked diffusion baseline (MDLM) with $\Nd=1024$ steps (cf. Figure~\ref{fig:pareto32}).
When using fewer decoding steps, e.g., $\Nd=64$, the proposed LaDiff profits heavily from latent guidance, and consequently outperforms the optimal setting baseline MDLM with $\Nd=1024$ by 22.2 absolute (24.5\% relative) GenPPL and 0.035 absolute (4\% relative) MAUVE, while boasting a $7\times$ wall-time acceleration factor. 
However, in this setting, the continuous diffusion using $\Nc=200$ represents a 111\% computational overhead compared to the discrete decoding budget using $\Nd=64$. 

Although latent diffusion scales favorably with batch size over discrete decoding (due to latent space compression and the absence of logits projection / categorical sampling, see Table~\ref{tab:throughputs}), this highlights the importance of distillation.
For a fixed number of $\Nd=64$ discrete decoding steps, the proposed distilled model DiLaDiff obtains 0.79 MAUVE with only $\Nc=5$ latent diffusion steps (resulting in $2\Nc=10$ latent denoiser NFEs because of self-conditioning, see Appendix~\ref{sec:details_distill}), reducing the gap with its teacher LaDiff (MAUVE=0.81) with $\Nc=200$.
The resulting overhead of latent diffusion drops down to 5\%, i.e., the latent can be computed in negligible time, while improving GenPPL by 17.3 absolute (15\% relative) and MAUVE by 0.17 absolute (27\% relative) compared to MDLM with the same amount of discrete decoding steps $\Nd=64$.

In Table~\ref{tab:openwebtext-many}, we compare our methods to official baselines MDLM \citep{sahoo2024simpleeffectivemaskeddiffusion} and DUO (with greedy-tailed sampler)~\citep{sahoo2025diffusionduality}, reporting 
results from the corresponding papers.
Furthermore, in Table~\ref{tab:openwebtext-few} we also report the performance of few-step generative performance for distilled models MDLM+SDTT and DUO+DCD \citep{sahoo2025diffusionduality}.
It is important to note that we \textit{do not distill the discrete decoder}, while MDLM+SDTT and DUO+DCD exclusively focus on that aspect and therefore obtain good few-decoding-steps performance. 
We show here that even without explicit discrete distillation, DiLaDiff is able to rival these state-of-the-art few-step methods by using latent-augmented discrete diffusion.
Applying discrete distillation techniques on top of our distilled latent denoiser is a natural extension of the proposed model, and we leave it for future work.

\begin{minipage}[t]{0.47\textwidth}
    \centering
    \begin{tabular}{@{} l cc @{}}
        \toprule
        & GenPPL ($\downarrow$) & Entropy ($\uparrow$) \\
        \midrule
        \textit{Data} & \textit{14.8} & \textit{5.44} \\
        \midrule
        MDLM & 104.5 & 5.63 \\
        DUO & 71.7 & 5.22 \\
        MDLM (ours) & 90.5 & 5.50 \\
        SDDM (ours) & 75.3 & 5.31 \\
        \midrule
        LaDiff & 62.9 & 5.40 \\
        LaDiff $\zeta{=}0.7$ & 47.8 & 5.26 \\
        \bottomrule
    \end{tabular}
    \captionof{table}{Many-step generative performance on OpenWebText. $\Nc=200, \Nd=1024$.}
    \label{tab:openwebtext-many}
\end{minipage}%
\hfill
\begin{minipage}[t]{0.47\textwidth}
    \centering
    \begin{tabular}{@{} l cc @{}}
        \toprule
        & GenPPL ($\downarrow$) & Entropy ($\uparrow$) \\
        \midrule
        \textit{Data} & \textit{14.8} & \textit{5.44} \\
        \midrule
        MDLM+SDTT & 62.3 & 5.49 \\
        DUO+DCD & 46.3 & 5.38 \\
        \midrule
        DiLaDiff & 108.7 & 5.53 \\
        DiLaDiff $\zeta{=}0.7$ & 76.3 & 5.39 \\
        \bottomrule
    \end{tabular}
    \captionof{table}{Few-step generative performance on OpenWebText. DiLaDiff using $\Nc=5, \Nd=32$.}
    \label{tab:openwebtext-few}
\end{minipage}

\paragraph{Temperature sampling}

Using various discrete decoding strategies reveals stark contrasts between the baseline MDLM and the proposed LaDiff.
As shown in Figure~\ref{fig:temperature}, we observe the influence of temperature logits scaling and nucleus sampling in the discrete diffusion decoder on the quality-diversity trade-off.
When sampling from LaDiff, we can lower the temperature and still preserve sample entropy, which demonstrates that the latent captures diversity to a large extent (although the decoder hast its share of stochasticity, as shown in Section~\ref{sec:exp_ae}).
In comparison, MDLM's sample entropy rapidly dwindles when decreasing the temperature, as the diversity is entirely determined by the discrete diffusion process, and nucleus sampling (with a probability threshold of 0.9) is required to obtain reasonable sample quality at temperatures approaching $\zeta=1$. 

\begin{figure}
    \centering
    \includegraphics[width=\linewidth]{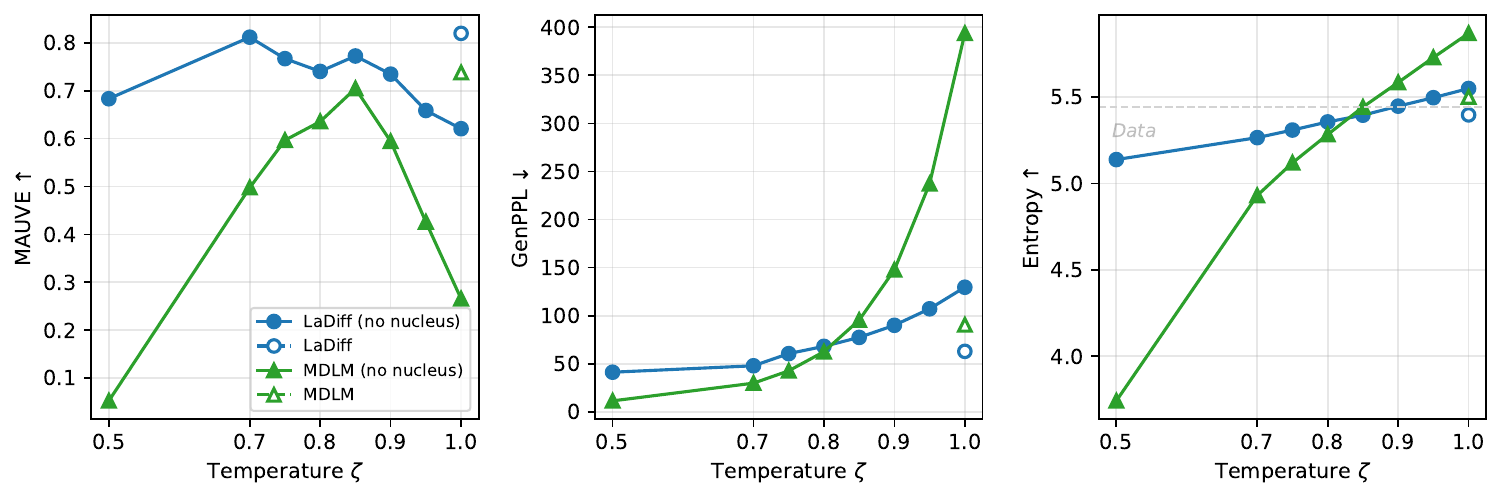}
    \caption{Temperature scaling and nucleus sampling. $\Nc=200$.}
    \label{fig:temperature}
\end{figure}

\subsection{Ablation studies}
\label{sec:ablations}

\paragraph{Auto-encoder training}

We ablate our design choices for the auto-encoding mechanisms described in Section~\ref{sec:reg}.
The influence of the regularization strategies is assessed by varying the parameters of the embedding and latent data augmentations and the results are presented in Table~\ref{tab:ablation_ae}.
Our best performing set of augmentation parameters is using
$\{ \sigmaregbert=0.5 \, , \pmaskbert=0.7, \pmaskz=0.7 \}$, as used in the proposed \mbox{LaDiff} model.
We consider two setups with a reduced amount of augmentation.
Firstly, we consider \mbox{MildAug} using $\{ \sigmaregbert=0.4 \, , \pmaskbert=0.5 \, , \pmaskz=0.5 \}$.
Secondly, we consider \mbox{SoftAug} using $\{ \sigmaregbert=0.3 \,, \pmaskbert=0.3 \, , \pmaskz=0.4 \}$, as used in \citet{meshchaninov2026cosmoscompressedsmoothlatent}).
Note that BERT features $\tilde{\x}$ examples in a batch are \textit{either} masked (half of the examples in a batch) or noised (the other half), not both.
The strongest regularization strategy of the three yields the best results: MildAug provides reasonable generative performance, but SoftAug regularization cannot craft a latent space suitable for diffusion, yielding poor MAUVE score.
Furthermore, we try reducing the chance of dropping the latent channel for decoding altogether, from $75\%$ to $50\%$ (cf. Dropout-50\% in Table~\ref{tab:ablation_ae}).
This has a similar effect as using the SoftAug regularization above: token recovery is high but generative performance is much worse than base LaDiff.
We also evaluated the impact of auto-encoder architecture by increasing the encoder depth from two to six layers (cf. BigEnc in Table~\ref{tab:ablation_ae}) and increasing the number of cross-attention layers in the decoder from one to three (cf. BigDec in Table~\ref{tab:ablation_ae}).
Both these modifications do not provide significant deviations, from which we conclude than even a minimal parameterization overhead is enough to learn a qualitative latent and condition the decoder on it.
For each auto-encoder, a latent denoiser is trained with an optimal, grid-searched diffusion schedule, which consistently yielded an optimal schedule parameter of $d=10$.

\begin{table}
    \centering
    \scalebox{0.95}{
  \begin{tabular}{lcccccc}
    \toprule
    & & & \multicolumn{2}{c}{Auto-encoding} & \multicolumn{2}{c}{Generation} \\
     \cmidrule(lr){4-5}\cmidrule(lr){6-7} 
    & Ablation setup & Compression & PPL ($\downarrow$) & \makecell{\%Recovery ($\uparrow$)} & GenPPL ($\downarrow$) & MAUVE ($\uparrow$) \\
    \midrule
    LaDiff & -- & 2x & $\leq$ 3.25 & 69.4 & \textbf{62.9} & \textbf{0.82} \\
    \midrule
    LaDiff & MildAug & 2x & $\leq$ 2.25 & 75.2 & 100.1 & 0.71 \\
    LaDiff & SoftAug & 2x & $\leq$ 1.92 & 81.1 & 98.5 & 0.24 \\
    LaDiff & Dropout-50\% & 2x & \textbf{$\leq$ 1.21} & \textbf{92.2} & 101.5 & 0.21 \\
    LaDiff & BigEnc & 2x & $\leq$ 2.88 & 76.7 & 65.1 & 0.78 \\
    LaDiff & BigDec & 2x & $\leq$ 3.01 & 75.6 & 64.5 & 0.80 \\
    \bottomrule
  \end{tabular}
  }
  \caption{Auto-encoder design ablations. $\Nc=200, \Nd=1024$.}
  \label{tab:ablation_ae}
\end{table}

\section{Conclusion}

We present (Di)LaDiff, a family of hybrid continuous-discrete diffusion language models.
LaDiff learns a semantically-consistent latent distribution of text, and uses it to guide a token-space diffusion language model.
By capturing token correlations at the sentence level, LaDiff substantially reduces the number of calls to the discrete decoder without sacrificing quality, outperforming its masked diffusion baseline while accelerating generation up to 7 times.
We further push acceleration by self-distilling the latent diffusion trajectories of LaDiff, yielding the few-step generation DiLaDiff model. 
DiLaDiff obtains results close to the LaDiff teacher using only 5\% of its latent denoiser calls, and comes within the reach of state-of-the-art models.

\paragraph{Limitations and future work}
Although DiLaDiff reduces the gap to its teacher with only a few steps, it does not match the topline teacher performance, suggesting more work is yet to be conducted on distillation.
Furthermore, 
we do not distill the discrete decoder, which is the main reason why true few-step performance (i.e., low latent diffusion \textit{and} low discrete decoding steps) is not state-of-the-art. This shall be explored in future work.
Finally, the presented strategies for regularizing auto-encoder training (see Section~\ref{sec:reg}), as well as the diffusion schedule design, are heuristic. We present a more principled attempt in Appendix \ref{sec:additional_ae}, but this requires more advanced work.

\section*{Impact statement}

This paper aims to improve language modeling. Potential societal consequences include disinformation or deep fakes, but this work does not specifically address nor raise any such issue.

\bibliography{bibliography}
\bibliographystyle{icml2025}


\appendix


\newpage

\newtheorem{theorem}{Theorem}[section]
\newtheorem{lemma}[theorem]{Lemma}

\section{Proofs} \label{sec:proof}

We prove here our main result in (\ref{eq:lmdm-rev-full}): 
\begin{theorem}[LaDiff reverse posterior]
    Given a latent vector $\z$ capturing correlations in $\x_0$, and the conditional denoiser   
\begin{equation*}
    q^\theta(\x_0|\x_t,\z) := \prod_\ell \delta\left(\x_0^\ell = \x_\theta^\ell(\x_t, \z) \right)\,,
\end{equation*}
the reverse posterior distribution of LaDiff is, for $s<t$:
\begin{equation*}
    q_{s\mid t}^{\theta}(\x_s|\x_t) := \int \Big( \prod_\ell  q_{s\mid t}\left(\xsl|\xtl, \x_0^\ell = \x_\theta^\ell(\x_t, \z) \right) \Big) p(\z) \mathrm{d}\z\,.
\end{equation*}
\end{theorem}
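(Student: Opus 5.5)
The plan is to start from the exact marginalization of the reverse posterior in~(\ref{eq:posterior}) and insert the latent as an auxiliary variable. Introduce $\z$ through the law of total probability:
\begin{equation*}
q(\x_0\mid\x_t)=\int q(\x_0\mid\x_t,\z)\,p(\z\mid\x_t)\,\mathrm{d}\z .
\end{equation*}
Substituting this into~(\ref{eq:posterior}) and exchanging the order of integration (Fubini, everything being nonnegative) gives
\begin{equation*}
q_{s\mid t}(\x_s\mid\x_t)=\int\Big(\int q_{s\mid t}(\x_s\mid\x_t,\x_0)\,q(\x_0\mid\x_t,\z)\,\mathrm{d}\x_0\Big)p(\z\mid\x_t)\,\mathrm{d}\z .
\end{equation*}
As in the paper's definition~(\ref{eq:denoiser}), which integrates against $p(\z)$, I will take the latent to be drawn from its prior independently of the masking state, so that $p(\z\mid\x_t)$ is replaced by $p(\z)$. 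This matches the sampling procedure, where $\z_{\tau_0}$ is generated before discrete decoding.

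Next, replace the true conditional denoiser $q(\x_0\mid\x_t,\z)$ by its parametric model $q^\theta(\x_0\mid\x_t,\z)=\prod_\ell\delta(\x_0^\ell=\x_\theta^\ell(\x_t,\z))$. The hypothesis that $\z$ ``captures the correlations in $\x_0$'' is used exactly here: conditionally on $(\x_t,\z)$ the clean tokens are independent. Hence a product of token-wise point masses is a faithful model, unlike the unconditional factorization~(\ref{eq:ddpm}). The inner integral over $\x_0$ against this Dirac product then collapses to evaluating $q_{s\mid t}(\x_s\mid\x_t,\x_0)$ at $\x_0=\x_\theta(\x_t,\z)$.

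Finally, use the fact that the forward kernel~(\ref{eq:mdm-forward}) factorizes over tokens independently. From this, the exact bridge posterior $q_{s\mid t}(\x_s\mid\x_t,\x_0)$ also factorizes as $\prod_\ell q_{s\mid t}(\xsl\mid\xtl,\x_0^\ell)$. This is a one-line Bayes computation: $q(\x_s\mid\x_t,\x_0)\propto q(\x_t\mid\x_s)q(\x_s\mid\x_0)$, and both factors are products over $\ell$, with per-token form given by~(\ref{eq:mdm-post-marginal-approx}). Plugging $\x_0^\ell=\x_\theta^\ell(\x_t,\z)$ into each factor yields the claimed expression.

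The main obstacle is the conditional-independence step. The statement holds exactly only under the modeling assumption that $\x_0^\ell$ are independent given $(\x_t,\z)$ and that $\z$ can be treated as independent of $\x_t$ with law $p(\z)$. I would state these explicitly as the precise meaning of ``$\z$ capturing correlations''; the result then follows as a definition-consistent identity rather than a bound.
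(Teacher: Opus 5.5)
Your derivation is valid under the assumptions you state, but it takes a different route from the paper. You start from the exact decomposition~(\ref{eq:posterior}) of the reverse kernel through $\x_0$, insert $\z$ by total probability, and collapse the $\x_0$-integral against the product denoiser. You then get the token-wise product from the factorized bridge $q_{s\mid t}(\x_s\mid\x_t,\x_0)$. This is exactly the MDLM derivation of~(\ref{eq:mdm-post-factorization-approx}), with $\z$ added to the denoiser's inputs.

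The paper instead first proves a lemma about the latent-conditioned reverse process itself. From $q^\theta(\x_0\mid\z)=\prod_\ell q^\theta(\x_0^\ell\mid\z)$ and the factorized forward kernels, it shows that $q^\theta(\x_s\mid\z)$ and $q^\theta(\x_t\mid\z)$ factorize. It then pushes this through Bayes' rule $q^\theta(\x_s\mid\x_t,\z)=q(\x_t\mid\x_s)\,q^\theta(\x_s\mid\z)/q^\theta(\x_t\mid\z)$ to get $\prod_\ell q^\theta_{s\mid t}(\xsl\mid\xtl,\z)$. Only afterwards does it substitute $\x_\theta^\ell(\x_t,\z)$ and marginalize.

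The paper's route gives a structural fact: whenever $q^\theta(\x_0\mid\z)$ factorizes, the induced reverse process given $\z$ factorizes exactly, with each factor depending only on $(\xtl,\z)$. That is what backs the claim that tokens can be unmasked in parallel once $\z$ is fixed. Your route is more elementary in three ways:
\begin{itemize}
\item it needs only independence of the clean tokens given $(\x_t,\z)$ at the time $t$ in question;
\item that requirement follows from the paper's assumption, since given $\z$ the pairs $(\x_0^\ell,\xtl)$ are independent across $\ell$;
\item it fits directly a network $\x_\theta^\ell$ that reads all of $\x_t$.
\end{itemize}

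Two caveats apply to both routes.
\begin{itemize}
\item Your substitution $p(\z\mid\x_t)\to p(\z)$ cannot hold literally for $t<1$, because $\z$ encodes $\x_0$ and $\x_t$ reveals part of it. The paper's closing ``marginalization over $\z$'' makes the same substitution tacitly, so the displayed formula is best read as a definition.
\item Conditional independence makes a product of categoricals $\prod_\ell\cat(\x_0^\ell;\x_\theta^\ell(\x_t,\z))$ the faithful model, not a product of point masses. The Dirac collapse gives the same kernel only because each per-token bridge in~(\ref{eq:mdm-post-marginal-approx}) is affine in $\x_0^\ell$.
\end{itemize}
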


\newcommand{\eqnote}[1]{\textcolor{gray}{\footnotesize #1}}

\begin{proof}

We first prove the following lemma following \citep{uziel2026crocodilcontinuousrobustconditioned}:
\begin{lemma}[Conditional independence of reverse posterior marginals] \label{lemma:conditional_independence}

Given a latent vector $\z$ capturing correlations in $\x_0$ through auto-encoding, such that the posterior distribution equals the product of its marginals:
    \begin{equation}\label{eq:posterior_independence}
        q^\theta(\x_0\mid\z) = \prod_\ell q^\theta(\x_0^\ell\mid\z) \,,
    \end{equation}
Then the reverse posterior conditioned on the latent $\z$ factorizes into its marginals:
\begin{equation}
    q^\theta_{s\mid t}(\x_s\mid\x_t,\z) =
    \prod_\ell q^\theta_{s\mid t}(\x_s^\ell\mid\x_t^\ell,\z) \,.
\end{equation}
\end{lemma}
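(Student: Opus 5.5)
The plan is to prove the lemma by writing the latent-conditioned reverse posterior as a marginalization over the clean sequence, and then pushing the factorizations through that integral (here a sum, since $\x_0\in\onehotset^L$). Analogously to (\ref{eq:posterior}), conditioning everything on $\z$ gives
\begin{equation*}
    q^\theta_{s\mid t}(\x_s\mid\x_t,\z) = \sum_{\x_0} q_{s\mid t}(\x_s\mid\x_t,\x_0)\, q^\theta(\x_0\mid\x_t,\z)\,.
\end{equation*}
This uses the fact that, given $\x_0$, the forward chain does not depend on $\z$: $\z$ is produced from the clean sequence by the encoder, so $\x_s,\x_t$ are independent of $\z$ given $\x_0$.

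The first ingredient is that the forward kernel (\ref{eq:mdm-forward}) factorizes over tokens. Hence the true bridge posterior factorizes as well: $q_{s\mid t}(\x_s\mid\x_t,\x_0)=\prod_\ell q_{s\mid t}(\xsl\mid\xtl,\x_0^\ell)$. This follows from Bayes' rule, $q(\x_s\mid\x_t,\x_0)\propto q(\x_t\mid\x_s)q(\x_s\mid\x_0)$, in which both factors are token-wise products under the Markov masking process. Each factor is exactly the per-token posterior displayed in (\ref{eq:mdm-post-marginal-approx}).

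The second ingredient is that $q^\theta(\x_0\mid\x_t,\z)$ factorizes into $\prod_\ell q^\theta(\x_0^\ell\mid\x_t^\ell,\z)$. This is the step I expect to be the main obstacle, because the hypothesis (\ref{eq:posterior_independence}) only states independence given $\z$ alone. I would apply Bayes' rule:
\begin{equation*}
    q^\theta(\x_0\mid\x_t,\z)\propto q(\x_t\mid\x_0)\,q^\theta(\x_0\mid\z)=\prod_\ell q_t(\xtl\mid\x_0^\ell)\,q^\theta(\x_0^\ell\mid\z)\,.
\end{equation*}
The right-hand side is a product of functions each depending on a single pair $(\x_0^\ell,\xtl)$. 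The normalizing constant therefore also splits into a product, and each factor normalizes to $q^\theta(\x_0^\ell\mid\xtl,\z)$. This yields both the factorization and the fact that the $\ell$-th marginal depends only on $\xtl$ (and $\z$). In the theorem's parameterization, this is consistent with the point-mass denoiser $\prod_\ell\delta(\x_0^\ell=\x_\theta^\ell(\x_t,\z))$, which is trivially a product.

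Finally, I substitute both products into the sum over $\x_0$. Because $\onehotset^L$ is a Cartesian product, the sum over $\x_0$ becomes a product over $\ell$ of sums over $\x_0^\ell\in\onehotset$. Each of these per-token sums equals $q^\theta_{s\mid t}(\xsl\mid\xtl,\z)$, which proves the lemma.

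For the theorem, I would first instantiate the lemma with the delta denoiser; the per-token sums then collapse to $q_{s\mid t}(\xsl\mid\xtl,\x_0^\ell=\x_\theta^\ell(\x_t,\z))$. I would then obtain $q^\theta_{s\mid t}(\x_s\mid\x_t)$ by marginalizing $\z$ against the prior $p(\z)$, as in (\ref{eq:denoiser}). This last step presumes the modeling choice of mixing with $p(\z)$ rather than with $p(\z\mid\x_t)$, so I would state it as a definition consistent with the sampler, where $\z$ is drawn once from the prior.
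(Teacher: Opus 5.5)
Your proof is correct, but it decomposes the problem differently from the paper. The paper's proof of the lemma never forms $q^\theta(\x_0\mid\x_t,\z)$. Instead it applies Bayes directly to $\x_s$, writing $q^\theta(\x_s\mid\x_t,\z)=q(\x_t\mid\x_s)\,q^\theta(\x_s\mid\z)/q^\theta(\x_t\mid\z)$; this uses that the forward chain is independent of $\z$ given $\x_0$, together with the Markov property. It then shows that the $\z$-conditional forward marginals $q^\theta(\x_s\mid\z)$ and $q^\theta(\x_t\mid\z)$ are token-wise products, by pushing the hypothesis through the factorized masking kernel, so the ratio splits into per-token Bayes ratios.

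You instead marginalize over $\x_0$ as in (\ref{eq:posterior}). You factorize two objects, the bridge posterior $q_{s\mid t}(\x_s\mid\x_t,\x_0)$ and the conditional denoiser $q^\theta(\x_0\mid\x_t,\z)$, and then distribute the sum over the product space $\onehotset^L$. Both arguments rest on the same two ingredients: conditional independence of the forward chain from $\z$ given $\x_0$, and token-wise factorization of the kernel combined with the hypothesis. The paper's route is a step shorter because it only needs forward marginals.

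What your route buys is the intermediate identity $q^\theta(\x_0\mid\x_t,\z)=\prod_\ell q^\theta(\x_0^\ell\mid\xtl,\z)$. This is precisely the object the point-mass denoiser stands in for. With it, the passage from the lemma to the parameterized form $\prod_\ell q_{s\mid t}(\xsl\mid\xtl,\x_0^\ell=\x_\theta^\ell(\x_t,\z))$ becomes a literal substitution into each per-token sum, whereas the paper simply asserts that step.

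Your closing remark is also more careful than the paper's ``follows by marginalization over $\z$''. Mixing with $p(\z)$ instead of $p(\z\mid\x_t)$ is indeed a modeling definition, not a consequence of marginalization.
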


We derive the shape of the conditional reverse posterior distibution $q^\theta(\x_s|\x_t,\z)$ using Bayes:
\begin{align}
    q^\theta(\x_s\mid\x_t,\z)
    &= \frac{q(\x_t\mid\x_s,\z) q^\theta(\x_s\mid\z)}{q^\theta(\x_t\mid\z)} \\
    &= \frac{q(\x_t\mid\x_s) q^\theta(\x_s\mid\z)}{q^\theta(\x_t\mid\z)} \,, \label{eq:bayes}
\end{align}
where the second equality results from $\z$ being an encoded view of $\x_0$, and from the Markov property $q(\x_t\mid\x_s,\x_0) = q(\x_t\mid\x_s)$.

We then derive the conditional distribution 
\begin{align}
    q^\theta_t(\x_t\mid\z) 
    &= \int q\left(\x_t\mid\x_0,\z) q^\theta(\x_0\mid\z\right) \mathrm{d}\x_0 \\
    &= \int q\left(\x_t\mid\x_0, \mathcal{E}_\phi(\x_0)) q^\theta(\x_0\mid\z\right) \mathrm{d}\x_0 \\
    &= \int q\left(\x_t\mid\x_0) q^\theta(\x_0\mid\z\right) \mathrm{d}\x_0
\end{align}
where the last equality comes from $\z$ being an encoded view of $\x_0$.
We then use the key assumption that $q^\theta(\x_0\mid\z)$ factorizes into its marginals $\prod_\ell q^q(\x_0^\ell\mid\z)$, which relies on the fact $\z$ captures the token correlations in $\x_0$ through auto-encoding. Using the forward kernel factorization, we obtain:
\begin{align}
    q^\theta(\x_s\mid\z) 
    &= \int q(\x_s\mid\x_0) q^\theta(\x_0\mid\z) \mathrm{d}\x_0 \\
    &= \int \Big( \prod_\ell q(\x_s^\ell\mid\x_0) q^\theta(\x_0^\ell\mid\z) \Big) \mathrm{d}\x_0 \\
    &= \prod_\ell \int \Big( q(\x_s^\ell\mid\x_0) q^\theta(\x_0^\ell\mid\z) \mathrm{d}\x_0 \Big) \\ 
    &= \prod_\ell q^\theta(\x_s^\ell\mid\z) \,,
\end{align}
with the fore-to-last equality resulting from Fubini's theorem.
Plugging these conditional distributions in (\ref{eq:bayes}) and using once again the forward kernel factorization, we obtain:
\begin{align}
    q^\theta_{s\mid t}(\x_s\mid\x_t,\z)
    &= \prod_\ell \frac{ q(\x_t^\ell\mid\x_s^\ell) q^\theta(\x_s^\ell\mid\z)}{q^\theta(\x_t^\ell\mid\z)} \\
    &= \prod_\ell q^\theta_{s\mid t}(\x_s^\ell\mid\x_t^\ell,\z) \,,
\end{align}
which proves Lemma~\ref{lemma:conditional_independence}.
In practice, that means that several tokens can be sampled in parallel and still provide consistent text since the denoiser is conditioned on an informative latent.

Parameterizing the conditional reverse marginal with our denoiser (\ref{eq:denoiser}), we obtain
\begin{equation}
    q^\theta_{s\mid t}(\x_s\mid\x_t,\z) 
    = \prod_\ell q_{s\mid t}\big(\x_s^\ell\mid\x_t^\ell,\x_0^\ell=\x_\theta^\ell(\x_t, \z) \big) \,.
\end{equation}
We emphasize that omitting the conditioning on $\z$ in this last equality would yield the MDLM reverse posterior in (\ref{eq:mdm-post-factorization-approx}) with erroneous token independence assumption.
The proof then follows by marginalization over $\z$.

\end{proof}

\newpage

\section{Additional related work}

\paragraph{Discrete diffusion}

Adaptation of continuous diffusion toward direct modelling of categorical samples was first proposed in \citet{austin2023structureddenoisingdiffusionmodels, hoogeboom2021argmaxflowsmultinomialdiffusion}, and the continuous-time Markov chain theory was developed in \citet{campbell2022continuoustimeframeworkdiscrete}. Extensions of the framework developed rapidly, including concrete score matching \citep{lou2024discretediffusionmodelingestimating} and discrete flow matching \citep{gat2024discreteflowmatching}. An essential step for bridging the gap to AR models was reached with mask-based diffusion practical enhancements \citep{sahoo2024simpleeffectivemaskeddiffusion, zheng2025maskeddiffusionmodelssecretly, shi2025simplifiedgeneralizedmaskeddiffusion, ou2026absorbingdiscretediffusionsecretly}, and with uniform-state diffusion \citep{sahoo2025diffusionduality, zhu2026simpledenoisingdiffusionlanguage}. Large mask-based diffusion language models LLaDa \citep{nie2025largelanguagediffusionmodels} and Dream \citep{ye2025dream} have become staples for larger-scale open research.

\paragraph{Continuous diffusion}

Efforts for embedding text diffusion in a continuous space operate on representations include probability distributions~\citep{cheng2025categoricalflowmatchingstatistical,jo2025continuousdiffusionmodellanguage}, one-hot encodings~\citep{lee2026flowmaplanguagemodels}
, token embeddings~\citep{li2022diffusionlm, dieleman2022continuousdiffusioncategoricaldata, gulrajani2023likelihoodbaseddiffusionlanguagemodels, chen2026langflowcontinuousdiffusionrivals} and contextual representations \citep{meshchaninov2026cosmoscompressedsmoothlatent,zhou2025coevolutionarycontinuousdiscretediffusion}.
Closely related to our work is \citet{chen2026langflowcontinuousdiffusionrivals} which defines self-distilled flow maps in a continuous token-wise embedding space. The main differences are: (\textit{1}) we consider a hybrid continuous-discrete model while they design a purely continuous model, therefore concentrating all the model complexity in the continuous modality; (2) we finetune rich contextual embeddings provided by BERT in a similar fashion as \citet{meshchaninov2026cosmoscompressedsmoothlatent}, while they learn shallow token-wise continous embeddings.
Also concurrent to our work is \citet{lee2026flowmaplanguagemodels}, especially because they propose a self-distillation technique using flow maps. However, they operate on one-hot encodings and consider continuous-only diffusion.

\paragraph{Inference-time adaptations}
Training-free adaptation of discrete language models consists in defining a modified joint mixture distribution combining an auxiliary model (often auto-regressive) and the original diffusion marginals. 
Examples of these methods are discrete copula diffusion \citep{liu2025discretecopuladiffusion}, energy-based diffusion \citep{xu_energy-based_2025} and adaptive parallel decoding \citep{israel2025acceleratingdiffusionllmsadaptive}.

\paragraph{Hybrid diffusion}

Hybrid models can be coarsely divided into (\textit{1}) cross-modality diffusion, where a parallel or joint diffusion process over discrete and continuous modalities of text is conducted, and (\textit{2}) cascaded-modality diffusion, where a continuous latent is first generated then used as guide for discrete diffusion.
Cross-modality diffusion studies include
\citet{pynadath2025candihybriddiscretecontinuousdiffusion, shariatian2025latentdiscretediffusionmodels, zhou2025coevolutionarycontinuousdiscretediffusion, zheng2025continuouslyaugmenteddiscretediffusion}.
The most related technique from this group is \citet{zhou2025coevolutionarycontinuousdiscretediffusion}, as they use a LLM pre-trained contextual representation as their continuous modality. 
However, they do not provide any generation results, focusing on perplexity alone, and they do not fully generate the latent before guiding the discrete diffusion.

Our work is an example of cascaded-modality diffusion, and other such approaches are presented in \citet{xie2025variationalautoencodingdiscretediffusion, shariatian2025latentdiscretediffusionmodels, uziel2026crocodilcontinuousrobustconditioned}.
\citet{xie2025variationalautoencodingdiscretediffusion} introduce a latent channel learned via variational auto-encoding. Although this effectively captures correlations via auto-encoding, the resulting prior is not learned a-posteriori by an additional generative model and instead a standard Gaussian sample is used as latent during inference. Furthermore, in accordance with \citet{meshchaninov2026cosmoscompressedsmoothlatent}, we found that using traditional KL-based variational regularization for the latent space can often result in mode collapse, and is less efficient than heuristic regularization techniques.
\citet{shariatian2025latentdiscretediffusionmodels} learn the latent prior with a continuous model, however they use a frozen pre-trained encoder in their language modelling experiments, train the decoder from scratch, and provide no results on complex language benchmarks such as e.g. OpenWebText.
Closely related to the present work is CroCoDiL \citep{uziel2026crocodilcontinuousrobustconditioned} which we regard as concurrent to ours, although evolving in a different experimental framework (they focus on Python coding with 8B-scale models).

\clearpage 

\section{Training details}
\label{sec:details}

\subsection{Auto-encoding}\label{sec:details_ae}

Following \citet{meshchaninov2026cosmoscompressedsmoothlatent}, the BERT encodings and latents are standardized coordinate-wise using statistics aggregated along training.
This serves the triple-purpose of (\textit{1}) regularizing the latent space by avoiding extreme dispersion of latent vectors; (\textit{2}) enabling variance-preserving diffusion schedules for learning the latent and \textit{(3)} facilitating comparison between different designs and analyses.

We use a modified version of the architecture in \citet{meshchaninov2026cosmoscompressedsmoothlatent}. In each of its layers, the original encoder architecture models the latent variable $\z$ through cross-attention between the encoder's hidden state $\h$ and a concatenation of the hidden state $\h$ and the latent $\z$ itself, initialized as a free set of learned vectors.
The original version of the decoder's architecture is symmetrical with respect to the encoder. The decoder's hidden state is initialized as the BERT embedding of a fully masked sequence, and it attends to a concatenation of itself and the latent channel through the following cross-attention mechanism:

\begin{equation} \label{eq:hcrossatt}
    \h \leftarrow \h + \mathrm{CrossAttention}(\h; \left[ \h, \z \right]) \,.
\end{equation} 

We leave the encoder unchanged but bring the following modifications to the decoder design:
\begin{itemize}
    \item We initialize the decoder's hidden state with the partially masked sequence $\x_t$, in order to expose the decoder to various levels of masking, as required in our training objective (\ref{eq:ae}).

    \item We use the pre-trained decoder's embedding table to encode the masked sequence $\x_t$, instead of BERT embeddings. We freeze this layer during auto-encoder training to improve stability.

    \item We separate the self-attention and cross-attention mechanisms in the decoder. We noticed in preliminary experiments that the original decoder architecture is not able to properly leverage the clean context in $\x_t$ and instead pays excessive attention to the latent $\z$, degrading the marginal utility of our pre-trained MDLM decoder. 
    Therefore, we simply insert a few cross-attention layers between the original MDLM decoder's self-attention layers. 
    As the decoder's hidden state is already modelled through self-attention throughout the decoder, we modify these cross-attention layers so as to only extract information from the latent channel.
    Finally, in order to avoid perturbing the decoder's hidden state early in training, we wrap these new cross-attention layers in zero-initialized pointwise convolution layers, similar as \citet{zhang2023addingconditionalcontroltexttoimage}:
    
    \begin{equation} \label{eq:crossatt}
        \h \leftarrow \h + \mathrm{ZeroConv}\left( \mathrm{CrossAttention}\left( \mathrm{ZeroConv}\left( \h\right) ; \z\right)\right)  \,.
    \end{equation} 
    
\end{itemize}

\begin{algorithm}[t]
\caption{Auto-encoder training}
\label{alg:ae-mdlm-train}
\begin{algorithmic}[1]
\Require Frozen BERT, encoder $\mathcal{E}_\phi$,
         decoder $\mathbf{x}_\theta$, masking kernel $q_t(\cdot\mid\x)$,
         regularization hyperparameters $\pmaskbert, \sigmaregbert, \pmaskz, \pdropoutz$,
         feature statistics $(\boldsymbol{\mu}_{\tilde{\x}}, \sigma_{\tilde{\x}})$,

 \State \textcolor{red}{\textbf{// --- Encoder path ---}}
 
\State $\tilde{\x} \gets \text{BERT}(\x)$ 
    \Comment{BERT's last layer representation}
\State $\tilde{\x} \gets (\z - \boldsymbol{\mu}_{\tilde{\x}}) / \sigma_{\tilde{\x}}$
       \Comment{normalize feature}
\If{$\mathrm{rand}() < \tfrac12$}
    \State $\forall (\ell, k)\, : \, \tilde{x}^\ell_k \sim \delta\bigl( 1 - \text{Bernoulli}(\pmaskbert) \bigr) \tilde{x}^\ell_k$
    \Comment{feature masking}
\Else
    \State $\tilde{\x} \sim \mathcal{N}\!\left(\sqrt{1- (\sigmaregbert)^2 } \,\tilde{\x}\,;\; (\sigmaregbert)^2\,\mathbf{I}\right)$
           \Comment{feature noise}
\EndIf
\State $\z \gets \mathcal{E}_\phi(\tilde{\x})$
       \Comment{encode feature}
\If{$\mathrm{rand}() < \tfrac12$}
    \If{$\mathrm{rand}() < \pdropoutz$}
    \State $\z \gets \mu_\z + \sigma_\z \boldsymbol{\eta}$, \; $\boldsymbol{\eta} \sim \mathcal{N}(\mathbf{0}, \mathbf{I})$
           \Comment{replace latent by Gaussian noise}
    \EndIf
\Else
    \State $\forall (\ell, k)\, : \, z^\ell_k \sim \delta\bigl( 1 - \text{Bernoulli}(\pmaskz) \bigr) z^\ell_k$
    \Comment{latent masking}
    
\EndIf

\State \textcolor{blue}{\textbf{// --- Decoder path ---}}

\State sample $t \sim \mathcal{U}(0,1)$ 
    \Comment{masking ratio}
\State sample $\x_t \sim q_t(\x_t \mid \x)$
    \Comment{masked tokens}
\State $\hat{\x} \gets \mathbf{x}_\theta(\x_t, \z)$
       \Comment{compute reconstruction}
\State $\mathcal{L} \gets \mathrm{CE}(\hat{\x}, \x)$ \Comment{cross-entropy versus clean tokens (~\ref{eq:ae})}
\end{algorithmic}
\end{algorithm}

The encoder is warmed up for 1000 steps while the decoder is warmed up for 10000 steps, leaving enough time to kick-start the encoder before the pre-trained decoder is updated. We freeze the decoder's embedding table to further improve training stability.
We finetune our auto-encoder with a batch size of 512 for 200k steps, which takes two days on 16 NVIDIA H100 GPUs.

\subsection{Latent diffusion}\label{sec:details_ldm}

We use the same network architecture for the latent denoiser $\z_\psi$ as in \citet{shabalin2025tencdmunderstandingpropertiesdiffusion}, i.e., a diffusion Transformer \citep{peebles2023scalablediffusionmodelstransformers} using positional noise-level conditioning. 
The learning target is direct data prediction.
We use self-conditioning \citep{chen2023analogbitsgeneratingdiscrete} to inform the network about its past predictions during sampling: we notice that this significantly improves generation quality at virtually no extra computational cost during inference. 
Self-conditioning is enforced during training on 50\% of batches (taken at random) by predicting the network output $\tilde{\z} = \z_\psi(\z_t, t, \emptyset)$ without any conditioning, detaching its gradients, and feeding this conditioning to the network for the actual prediction $\z_\psi(\z_t, t, \tilde{\z})$ which is used for computing the loss.
Pseudo-code for LaDiff training is given in Algorithm~\ref{alg:latent-train}.

\begin{algorithm}[t]
\caption{LaDiff training}
\label{alg:latent-train}
\begin{algorithmic}[1]
\Require encoder $\mathcal{E}_\phi$, latent denoiser $\z_\psi$, schedules $\alpha_t,\sigma_t$,
         latent statistics $(\boldsymbol{\mu}_\z,\sigma_\z)$
\State $\z \gets \mathcal{E}_\phi(\x)$
       \Comment{encode text}
\State $\z \gets (\z - \boldsymbol{\mu}_\z) / \sigma_\z$
       \Comment{normalize latent}
\State sample $t \sim \mathcal{U}(0, 1)$
\State sample $\boldsymbol{\epsilon} \sim \mathcal{N}(\mathbf{0}, \mathbf{I})$
\State $\z_t \gets \alpha_t\,\z + \sigma_t\,\boldsymbol{\epsilon}$
       \Comment{forward diffuse latent}
\If{$\mathrm{rand}() < \tfrac12$}
    \State $\tilde{\z} \gets \z_\psi(\z_t, t, \emptyset)$
           \Comment{self-conditioning prediction}
\Else
    \State $\tilde{\z} \gets \emptyset$
           \Comment{no self-conditioning}
\EndIf
\State $\hat{\z} \gets \z_\psi(\z_t, t, \tilde{\z})$
       \Comment{predict target latent}
\State $\mathcal{L} \gets \lVert \hat{\z} - \z \rVert_2^2$
       \Comment{MSE loss (\ref{eq:gaussian_diffusion})}
\end{algorithmic}
\end{algorithm}

\newcommand{\zlat}{\mathbf{z}}

\begin{algorithm}
\caption{LaDiff sampling}
\label{alg:ladiff-sampling}
\begin{algorithmic}[1]
\Require
  latent schedule $\{\tau_m\}_{m=0}^{M}$,
  discrete schedule $\{t_n\}_{n=0}^{N}$,
  stochasticity control $0 \le \gamma \le 1$,
  latent denoiser $\z_\psi$ 
  discrete decoder $\mathbf{x}_\theta$.

\State \textcolor{red}{\textbf{// --- Latent sampling ---}}
  
\State Sample $\zlat_T \sim \mathcal{N}(\bm{0},\mathbf{I})$
\State $\hat{\zlat} \gets \varnothing$
  \Comment{initialize self-conditioning}

\For{$m = M,\, M{-}1,\, \ldots,\, 1$}
  \If{$\gamma > 0$}
    \State $\tau_{m-1} \gets \sqrt{1-\gamma^2}\,\tau_{m-1}$
      \Comment{warp time grid for $\gamma$ sampling}
  \EndIf
    \State $\hat{\z} \gets \z_\psi\bigl(\zlat_{\tau_m},\, \tau_m,\, \hat{\zlat}\bigr)$
        \Comment{estimate clean latent}
    \State $\hat{\v} = 1 / \sigma_{\tau_m} \bigl( ( \sigma_{\tau_m}\dot{\alpha}_{\tau_m} - \dot{\sigma}_{\tau_m} \alpha_{\tau_m} ) \hat{\z} + \dot{\sigma}_{\tau_m} \z_{\tau_m} \bigr)$
        \Comment{compute velocity}
    \State $\zlat_{\tau_{m-1}} \gets \zlat_{\tau_{m}} - (\tau_{m} - \tau_{m-1}) \hat{\v}$
      \Comment{take integration step}
  \If{$\gamma > 0$}
    \State Sample $\boldsymbol{\epsilon} \sim \mathcal{N}(\bm{0}, \mathbf{I})$
    \State $\zlat_{\tau_{m-1}} \gets \sqrt{1-\gamma^2}\,\zlat_{\tau_{m-1}} + \gamma\,\boldsymbol{\epsilon}$ 
        \Comment{re-noise}
  \EndIf
\EndFor

\State \textcolor{blue}{\textbf{// --- Discrete decoding ---}}
\State Sample $\mathbf{x}_{t_N} \sim \pi_{\mathbf{m}}$
\State $\zlat \gets \sigma_\z \zlat_{\tau_0} + \boldsymbol{\mu}_\z$ 
    \Comment{denormalize latent}
\For{$n = N,\, N{-}1,\, \ldots,\, 1$}
  \State $\hat{\mathbf{x}} \gets \mathbf{x}_\theta^{\ell}\bigl(\mathbf{x}_{t_n},\,\zlat\bigr)$
    \Comment{estimate clean tokens}
  \State $\mathbf{x}_{t_{n-1}} \sim q^\theta_{t_{n-1} \mid t_n}\bigl(\x_{t_{n-1}} \,\big|\, \mathbf{x}_{t_n},\,\x_0 = \hat{\mathbf{x}}\bigr)$
    \Comment{unmask tokens with reverse kernel}
\EndFor
\State $\mathbf{x} \gets \mathbf{x}_{t_0}$
\end{algorithmic}
\end{algorithm}

We use variance-preserving noise schedules as the latents are standardized coordinate-wise using aggregated statistics from the auto-encoder's training.
Following \citet{meshchaninov2026cosmoscompressedsmoothlatent}, we adopt the noise schedule proposed in \citet{hoogeboom2023simplediffusionendtoenddiffusion}:
\begin{align} \label{eq:schedule}
    \mathrm{logSNR}(t) &= -d \log \tan (\pi t / 2) \,, \\
    \sigma^2(t) &= \mathrm{sigmoid}( - \mathrm{logSNR}(t) ) \,, \\
    \alpha^2(t) &= \mathrm{sigmoid}( \mathrm{logSNR}(t) ) \,.
\end{align}

The schedule is visualized for different values of $d$ in Figure~\ref{fig:schedules}, and we perform a grid search for the warping parameter $d$ (see results in Table~\ref{tab:schedule}).
In general, we found $d=10$ to be optimal for most auto-encoder configurations.
This means that, unlike continuous modalities, e.g., natural images or audio, most of the denoising effort should be focused on very noisy latents, which directly relates to the categorical nature of text. 
Indeed, even if we use contextual representations and smoothness regularization, different tokens fundamentally map to discretely distinguishable latent vectors. 
The amount of noise needed to allow a diffusion trajectory to jump between  different latent vectors is thus typically high compared to embeddings of continuous data.

\begin{figure}
    \centering
    \includegraphics[width=0.8\linewidth]{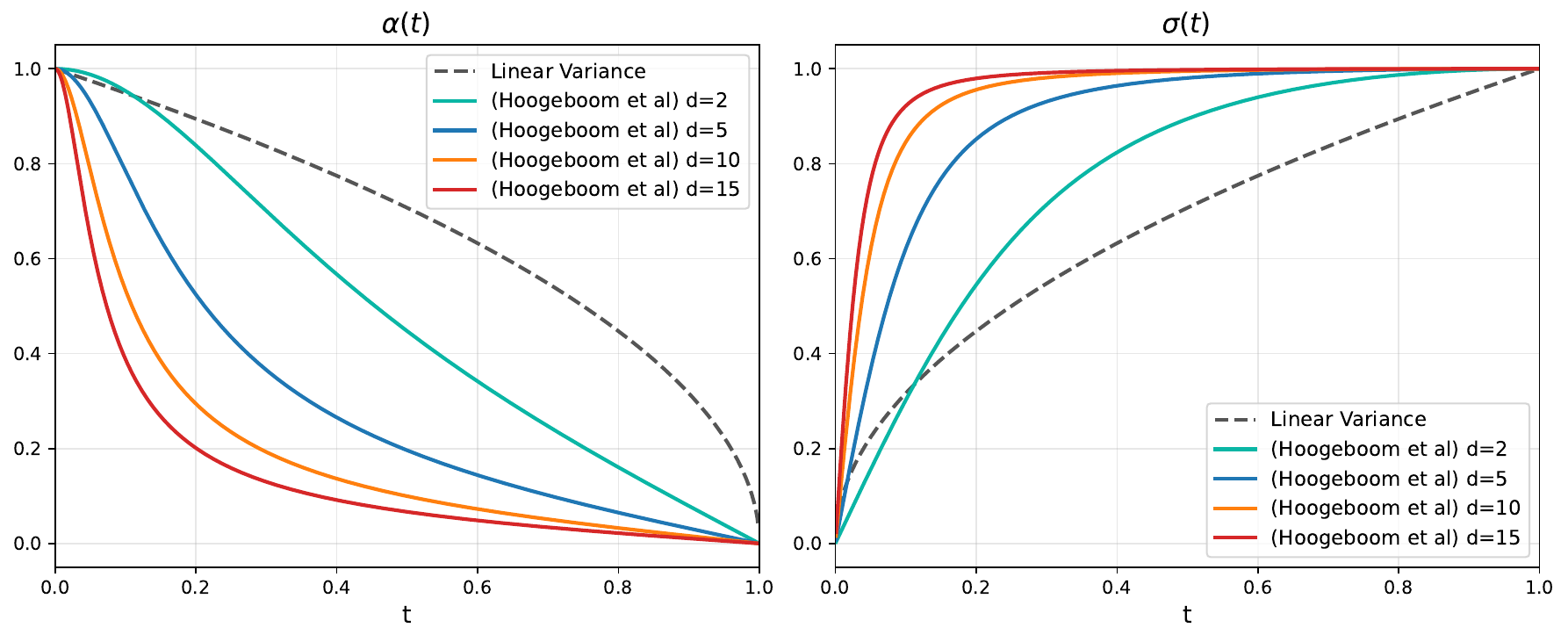}
    \caption{Diffusion schedules following the shape in \citet{hoogeboom2023simplediffusionendtoenddiffusion}. Higher $d$ parameters concentrate timesteps toward higher noise levels.}
    \label{fig:schedules}
\end{figure}

We train our latent diffusion model with a batch size of 512 for 500k steps, which takes two days on 8 NVIDIA H100 GPUs.

\subsection{Distillation}\label{sec:details_distill}

DiLaDiff's network requires an extra time-embedding layer in order to condition the average velocity prediction $\u_\eta(\z_t, t, r)$ on terminating time step $r$. 
We initialize this layer with random weights, and embed the time difference $t-r$ instead of absolute $r$, following \citet{geng2025meanflowsonestepgenerative}.

Self-conditioning is critical for performance in both LaDiff and DiLaDiff. 
We therefore use the \textit{self-conditioned} LaDiff to compute the teacher target for self-distillation. This only requires an extra forward pass (with detached gradients) at training time.
Furthermore, we re-use the teacher's self-conditioning layer in the student network, and pass self-conditioning as $\z_\eta(\z_t, 0) := \Phi \left( \u_\eta(\z_t, t, t), t \right)$ for 50\% of training examples at random, using the property $\u(\z_t, t, t) = \v(\z_t, t)$. $\Phi(\cdot, \cdot)$ is the affine function that maps instantaneous velocities to clean data:
\begin{equation} \label{eq:phi}
    \z = \Phi(\v(\z_t, t), t) := \frac{\sigma_t \v(\z_t, t) - \dot{\sigma}_t \z_t }{\sigma_t \dot{\alpha}_t - \dot{\sigma}_t \alpha_t} \,,
\end{equation}
This expression can be easily derived from the diffusion schedule
\begin{equation}\label{eq:diff}
\z_t = \alpha_t \z + \sigma_t \boldsymbol\epsilon \quad \text{with} \quad \boldsymbol\epsilon \sim \mathcal{N}(0, \mathbf{I}) 
\end{equation}
and definition of instantaneous velocity
\begin{align}
\v(\z_t, t) &= \frac{\mathrm{d}\z_\tau}{\mathrm{d}\tau}\Big|_{\tau=t} \\
 &= \dot{\alpha}_t \z + \dot{\sigma}_t \boldsymbol\epsilon \\
 &= \dot{\alpha}_t \z + \dot{\sigma}_t \frac{\z_t - \alpha_t \z}{\sigma_t} \\ 
 &= \frac{ \left( \dot{\alpha}_t \sigma_t - \dot{\sigma}_t\alpha_t \right) \z + \dot{\sigma}_t\z_t}{\sigma_t} \,,
\end{align}
where the third equality comes from solving for $\boldsymbol{\epsilon}$ in (\ref{eq:diff}).
Solving for $\z$ in the last equality concludes the proof of (\ref{eq:phi}).
At sampling time (see full procedure in Algorithm~\ref{alg:diladiff-sampling}), we dedicate an extra forward pass to compute this self-conditioning.
We also experiment with more efficient designs, where we duplicate the last DiT's MLP head and train it to estimate $\z_\eta(\z_t, 0)$ or $\u_\eta(\z_t, t, 0)$ directly. However, learning this objective while simultaneously optimizing the MeanFlow loss represents an arduous task compared to the limited capacity overhead, and ultimately yielded suboptimal results (see ablations results in Appendix~\ref{sec:additional_distill}).
We dedicate more exploration of this efficient design in future work.

Training pseudo-code is given in Algorithm~\ref{alg:latent-meanflow}.
In practice, the second term in $\u_\text{tgt}$ is efficiently computed using the true Jacobian vector product $
\Big( \partial_\z \u_\eta \; \partial_t \u_\eta \: \partial_r \u_\eta
\Big)^T
\Big( \mathbf{v}^{\top} \; 1 \; 0 \Big)
$.
We sample $t$ and $r$ in the same logit-normal distribution $\text{LogitNormal}(-1, 1)$, where samples are taken from a Gaussian $\mathcal{N}(-1, 1)$ and mapped to $[0, 1]$ using the logistic function. 
At random with 25\% chance, $r$ is taken equal to $t$, thereby reducing to classical flow matching, which improves training stability by matching the student and teacher predictions.
We use loss normalization with a regularization factor of $5$, effectively computing $\Delta / (||\Delta||_2 + 5)$ where $\Delta$ is the loss.
We use a linear warmup of 10k iterations for the tangent term in the MeanFlow objective for improved stability, following \citet{geng2025meanflowsonestepgenerative}.
We self-distill a LaDiff teacher into DiLaDiff with a global batch size of 2048 for 25k steps, which takes 15 hours on 16 NVIDIA H100 GPUs.

\begin{algorithm}[h]
\caption{DiLaDiff training: self-distilling LaDiff with MeanFlow}
\label{alg:latent-meanflow}
\begin{algorithmic}[1]
\Require Encoder $\mathcal{E}_\phi$, teacher $\z_\psi$, student $\mathbf{u}_\eta$,
         schedules $\alpha_t,\sigma_t$, latent statistics $(\boldsymbol{\mu}_\z,\sigma_\z)$
\State $\z \gets \mathcal{E}_\phi(\x)$
       \Comment{encode text}
\State $\z \gets (\z - \boldsymbol{\mu}_\z) / \sigma_\z$
       \Comment{normalize latent}
\State sample $t \sim \text{LogitNormal}(-1, 1)$
\If{$\mathrm{rand}() < \tfrac14$}
    \State $r=t$ 
        \Comment{pure flow matching}
\Else
    \State sample $\sim \text{LogitNormal}(-1, 1)$ and arrange timesteps s.t. $r<t$
\EndIf
\State sample $\boldsymbol{\epsilon} \sim \mathcal{N}(\mathbf{0}, \mathbf{I})$
\State $\z_t \gets \alpha_t\,\z + \sigma_t\,\boldsymbol{\epsilon}$
       \Comment{forward diffuse latent}
\State $\tilde{\z}_\text{teacher} \gets \z_\psi(\z_t, t, \emptyset)$
       \Comment{compute teacher self-conditioning}
\State $\z_\text{teacher} \gets \z_\psi(\z_t, t, \tilde{\z}_\text{teacher})$
       \Comment{teacher latent prediction}
\State $\mathbf{v}_\text{teacher} \gets \dot{\alpha}_t\,\z_\text{teacher} + \dot{\sigma}_t\,\boldsymbol{\epsilon}$
       \Comment{teacher velocity (instantaneous latent velocity field)}

\If{$\mathrm{rand}() < \tfrac12$}
    \State $\tilde{\z}_\text{student} \gets \Phi(\u_\eta(\z_t, t, r, \emptyset) , t)$
        \Comment{update student self-conditioning}
\Else 
    \State $\tilde{\z}_\text{student} \gets \emptyset$
        \Comment{disable student self-conditioning}
\EndIf

\State $\hat{\u} \gets \mathbf{u}_\eta(\z_t, t, r, \tilde{\z}_\text{student})$
       \Comment{student average-velocity prediction}
\State $\u_\text{JVP} \gets \texttt{jvp}\Big( \hat{\u}, (\mathbf{v}_\gamma^T \; 1 \; 0) \, \Big)$ 
    \Comment{tangent term}
\State $\mathbf{u}_{\text{tgt}} \gets \mathbf{v}_\gamma - (t-r) \, \u_\text{JVP}$
       \Comment{MeanFlow target}
\State $\mathcal{L} \gets \mathbb{E}\big[\lVert \hat{\u} - \mathrm{stopgrad}(\mathbf{u}_{\text{tgt}}) \rVert_2^2\big]$
       \Comment{MeanFlow loss}
\end{algorithmic}
\end{algorithm}

\newcommand{\Zflow}{\mathbf{u}_{\eta}}

\begin{algorithm}
\caption{DiLaDiff sampling}
\label{alg:diladiff-sampling}
\begin{algorithmic}[1]
\Require
  latent schedule $\{\tau_m\}_{m=0}^{M}$,
  discrete schedule $\{t_n\}_{n=0}^{N}$,
  stochasticity control $\gamma \ge 0$,
  \texttt{extra\_head} flag,
  self-conditioning threshold $\tau_{\mathrm{thr}}$,
  distilled latent model $\Zflow$ 
  discrete decoder $\mathbf{x}_\theta$.

\State \textcolor{red}{\textbf{// --- Latent sampling ---}}
  
\State Sample $\zlat_T \sim \mathcal{N}(\bm{0},\mathbf{I})$
\State $\tilde{\zlat} \gets \varnothing$
  \Comment{initialize self-conditioning}

\For{$m = M,\, M{-}1,\, \ldots,\, 1$}
  \If{$\gamma > 0$}
    \State $\tau_{m-1} \gets \sqrt{1-\gamma^2}\,\tau_{m-1}$
      \Comment{warp time grid for $\gamma$ sampling}
  \EndIf
  \If{\texttt{extra\_head} \textbf{and} $\tau_m \ge \tau_{\mathrm{thr}}$}
    \State $(\hat{\u},\,\tilde{\zlat}) \gets
      \u_\eta\bigl(\zlat_{\tau_m},\, \tau_m,\, \tau_{m-1},\, \tilde{\zlat}\bigr)$
      \Comment{estimate average velocity + update self-conditioning}
  \Else
    \State $\hat{\u} \gets \u_\eta\bigl(\zlat_{\tau_m},\, \tau_m,\, \tau_{m-1},\,\tilde{\zlat}\bigr)$
      \Comment{only estimate average velocity}
  \EndIf
    \State $\zlat_{\tau_{m-1}} \gets \zlat_{\tau_{m}} - (\tau_m - \tau_{m-1}) \hat{\u}$
        \Comment{take integration step}
  \If{\textbf{not} \texttt{extra\_head} \textbf{or} $\tau_m < \tau_{\mathrm{thr}}$}
    \State $\tilde{\zlat} \gets
      \Phi\Big( \Zflow\bigl(\zlat_{\tau_{m-1}},\, \tau_{m-1},\, \tau_{m-1},\, \tilde{\zlat}\bigr) , \tau_m \Big)$
      \Comment{extra forward pass for self-conditioning}
  \EndIf
  \If{$\gamma > 0$}
    \State Sample $\boldsymbol{\epsilon} \sim \mathcal{N}(\bm{0}, \mathbf{I})$
    \State $\zlat_{\tau_{m-1}} \gets \sqrt{1-\gamma^2}\,\zlat_{\tau_{m-1}} + \gamma\,\boldsymbol{\epsilon}$ 
        \Comment{re-noise}
  \EndIf
\EndFor

\State \textcolor{blue}{\textbf{// --- Discrete decoding ---}}
\State Sample $\mathbf{x}_{t_N} \sim \pi_{\mathbf{m}}$
\State $\zlat \gets \sigma_\z \zlat_{\tau_0} + \boldsymbol{\mu}_\z$ 
    \Comment{denormalize latent}
\For{$n = N,\, N{-}1,\, \ldots,\, 1$}
  \State $\hat{\mathbf{x}} \gets \mathbf{x}_\theta^{\ell}\bigl(\mathbf{x}_{t_n},\,\zlat\bigr)$
    \Comment{estimate clean tokens}
  \State $\mathbf{x}_{t_{n-1}} \sim q_{t_{n-1} \mid t_n}\bigl(\x_{t_{n-1}} \,\big|\, \mathbf{x}_{t_n},\,\hat{\mathbf{x}}\bigr)$
    \Comment{unmask tokens with reverse kernel}
\EndFor
\State $\mathbf{x} \gets \mathbf{x}_{t_0}$
\end{algorithmic}
\end{algorithm}

\clearpage 

\section{Additional experiments}\label{sec:additional}

\subsection{Dependency to masking ratio}\label{sec:additional_masking}

We report in Figure~\ref{fig:reconstruction} the reconstruction performance of our auto-encoder (with a 2$\times$ compression factor) versus pure masked diffusion, as a function of the masking ratio. 
Unlike MDLM, the auto-encoder is able to reconstruct the sequence reasonably well even when the masked input to the decoder has little to no unmasked context. 
Furthermore, we plot in Figure~\ref{fig:attention} the contributions of the cross-attention layers attending to the latent in the decoder. In the two cross-attention layers injected in the decoder (at the first and last position), the contribution of the cross-attention weights increases with the masking ratio, while the contribution of self-attention immediately preceding these cross-attention layers decreases with the masking ratio.

This demonstrates that the latent space contains most of the information needed, and therefore support our claim that the latent is particularly useful at the beginning of generation where available context is scarce.

\begin{figure}[h]
    \centering
    \includegraphics[width=0.9\linewidth]{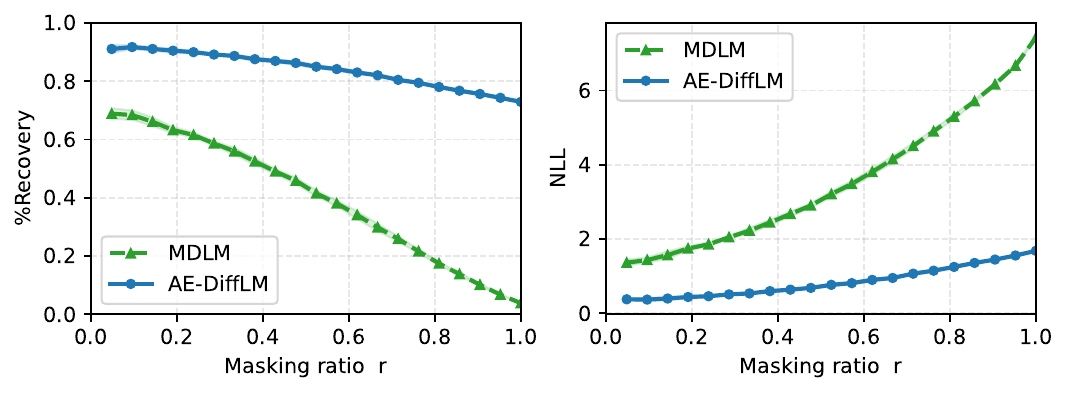}
    \caption{Performance of MDLM and auto-encoder as a function of the masking ratio. Left: Token recovery rate. Right: Negative log likelihood.}
    \label{fig:reconstruction}
\end{figure}

\begin{figure}[h]
    \centering
    \includegraphics[width=0.9\linewidth]{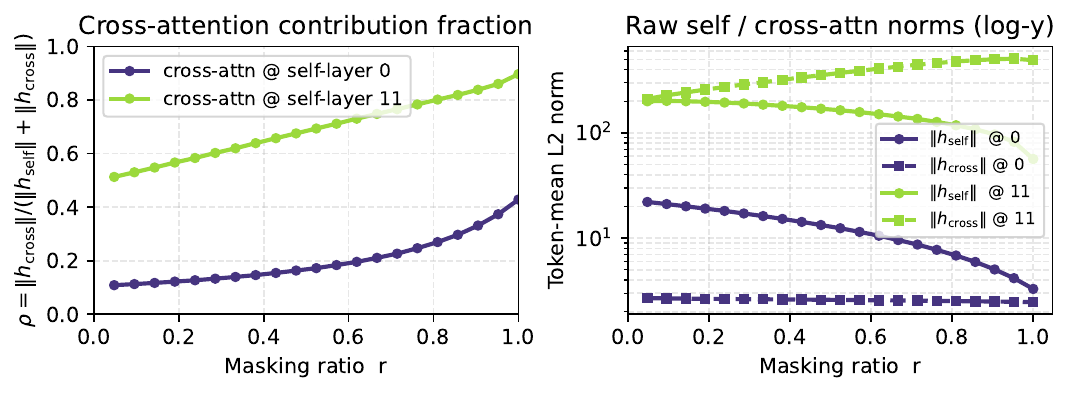}
    \caption{Cross- and self-attention contributions in decoder. Left: Fraction of the cross-attention over self-attention activations. Right: Raw norms of cross- and self-attention in first and last layer.}
    \label{fig:attention}
\end{figure}

\subsection{Confidence-based token selection}\label{sec:additional_confidence}

Specific to diffusion-based language modeling is the ability to use model heuristics to determine the order of token unmasking \citep{nie2025largelanguagediffusionmodels, benhamu2025acceleratedsamplingmaskeddiffusion}. 
One common such heuristic is the model's own confidence, and it can be used to select either the top-k most confident tokens , or all tokens whose confidence exceeds a pre-determined threshold, to be decoded in the current step. 
We present the performance yielded by top-k confidence-based token selection with LaDiff and MDLM in Figure~\ref{fig:confidence}.
Similar to the experiment on temperature-based logits rescaling in Section~\ref{sec:exp_ae}, LaDiff yields reasonable results when applying top-k confidence-based token selection, although the resulting sample entropy decreases severely from $\approx$5.40 down to $\approx5.00$, and MAUVE score suffers significantly from this loss of diversity.
In comparison, MDLM cannot exploit its own confidence and ends up in the trap of repeating very confident tokens such as "the", ".", etc. This yields catastrophic entropy and inexploitable results.
Importantly, we dot not mean to imply that these results do simply transfer at scale: model confidence is de-facto a reliable heuristic for larger purely discrete diffusion models such as \citet{nie2025largelanguagediffusionmodels}.
However, we demonstrate that, at the reference GPT2-small scale, latent guidance unlocks the ability of the discrete decoder to exploit its own confidence.

\begin{figure}[h]
    \centering
    \includegraphics[width=\linewidth]{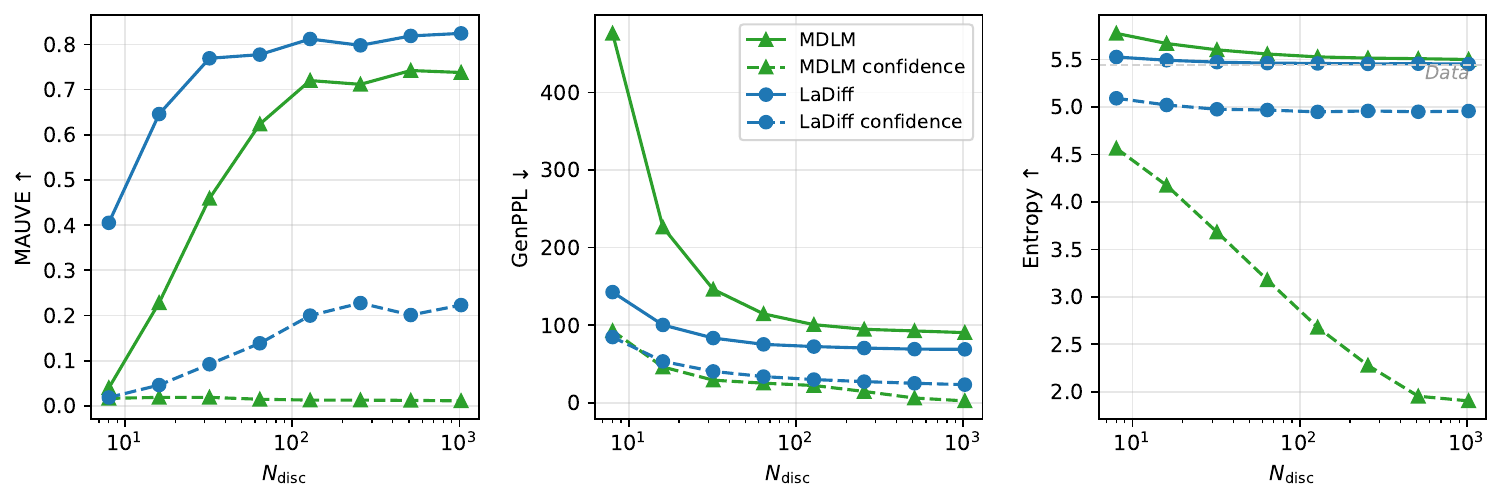}
    \caption{Confidence-based vs random token selection. $\Nc=200$.}
    \label{fig:confidence}
\end{figure}

\subsection{Decoder robustness and latent diffusion schedule}\label{sec:additional_ae}

We visualize the robustness of our decoder to noisy text latents, for a fixed set of latent augmentations, in Figure~\ref{fig:robustness}. As already demonstrated in Table~\ref{tab:ldm_ae}, larger latent spaces have better token recovery rates. We further show here that they exhibit similar robustness profiles, i.e. that the decoder is very robust to noise levels below a standard deviation of 0.8, above which the token recovery rate drops steeply.
As the transition region is located toward $\sigma\approx0.8$, we skew the latent diffusion schedule toward high noise levels so as not to waste training budget on very easy denoising instances.
We therefore tune the $d$ parameter of the variance-preserving schedule in (\ref{eq:schedule}). Results are displayed in Table~\ref{tab:schedule}, and as in most preliminary experiments the grid-search converged toward an optimal parameter of $d=10$.

\begin{minipage}[c]{0.48\textwidth}
  \centering
  \includegraphics[width=\linewidth]{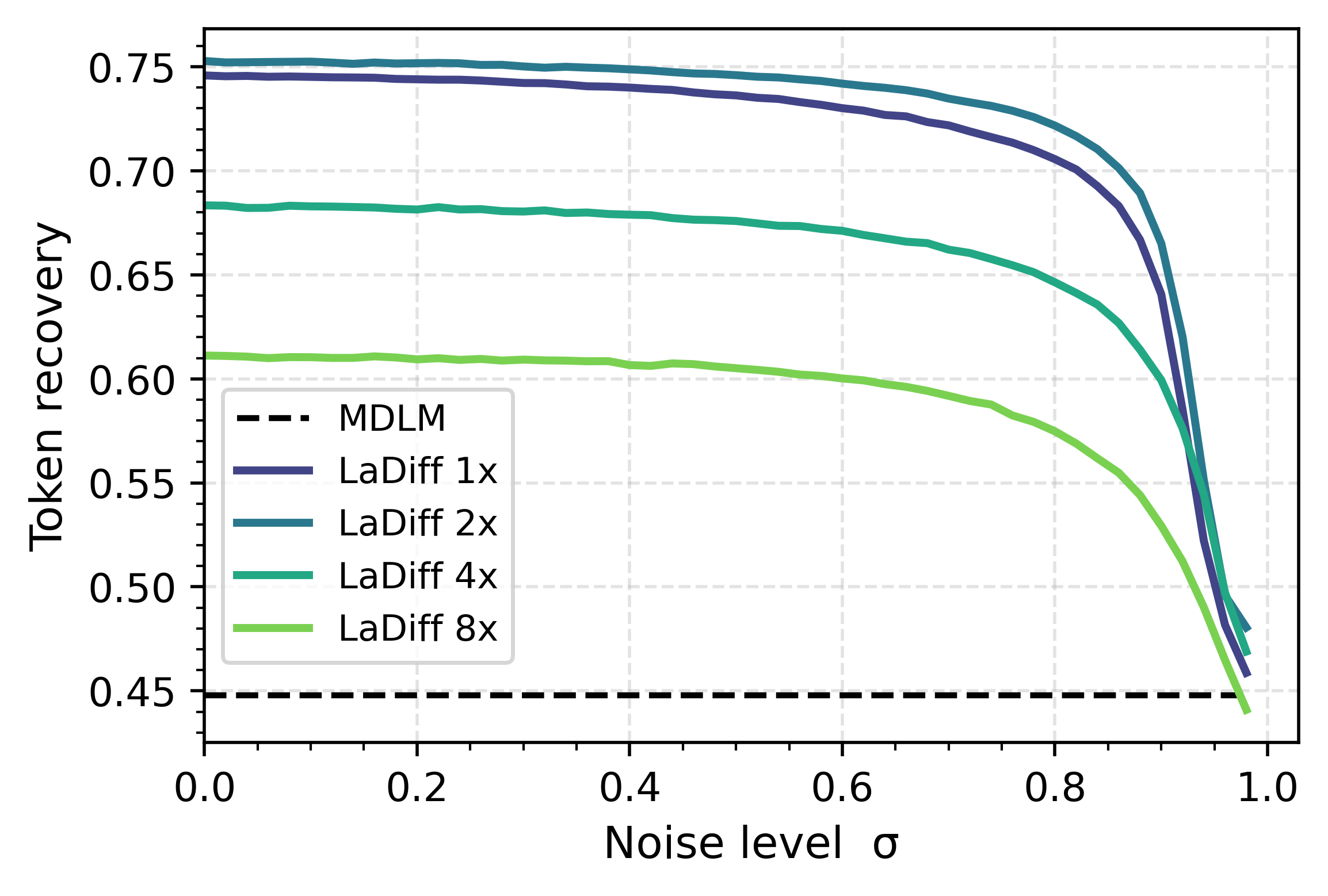}
  \captionof{figure}{Decoder robustness to latent noise on OpenWebText: MDLM and LaDiff with different compression factors.}
  \label{fig:robustness}
\end{minipage}\hfill
\begin{minipage}[c]{0.5\textwidth}
    \centering
    \scalebox{0.8}{
    \begin{tabular}{l c ccc}
        \toprule
        & $d$ & GenPPL ($\downarrow$) & Entropy ($\uparrow$) & MAUVE ($\uparrow$) \\
        \midrule 
        \textit{Data} & - & 14.8 & 5.44 & 1.00  \\
        \midrule
         LaDiff & 2 & 92.0 & \textbf{5.52} & 0.78 \\
         LaDiff & 5 & 71.4 & 5.45 & 0.80 \\
         LaDiff & 10 & \textbf{62.3} & 5.40 & \textbf{0.82} \\
         LaDiff & 15 & 75.4 & 5.45 & 0.81 \\
         \bottomrule
    \end{tabular}
    }
    \captionof{table}{Generative performance of LaDiff when varying the noise schedule parameter $d$ in~(\ref{eq:schedule}). $\Nc=200, \Nd=1024$.}
    \label{tab:schedule}
\end{minipage}

We propose an alternative, more principled approach, following the discussion in \citet{lee2026flowmaplanguagemodels}, where a closed-form of the token decoding error rate is available in their one-hot encoding setup.
The decoding error rate $\omega(t)$ is defined as the error rate in the token space, for a linearly-evolving noise variance $\sigma^2(t) := t$:
\begin{equation}
\omega(t) = 1 -  \frac{\%\mathrm{Recovery}(t)}{\%\mathrm{Recovery}(0)} \,.
\end{equation}
We do not have a closed-form expression of $\omega(t)$ as in \citet{lee2026flowmaplanguagemodels}, therefore we perform a fit of the empirical recovery rate in Figure~\ref{fig:robustness} to derive a diffusion time step reparameterization.
We find that hyperbolic tangent functions can provide a satisfying fit of the decoding error rate function, with only four parameters:
\begin{equation}
    \omega_\mathrm{fit}(t ; \{ k, t_0, \omega_\mathrm{min}, \omega_\mathrm{max} \} ) := \omega_\mathrm{min} + (\omega_\mathrm{max} - \omega_\mathrm{min}) \frac{1 + \tanh (k * (t - t_0)) }{2} \,
\end{equation}
which we then invert to obtain the reparameterized time:
\begin{equation}
    \tilde{t} = \omega_\mathrm{fit}^{-1}(\omega ; \{ k, t_0, \omega_\mathrm{min}, \omega_\mathrm{max} \}) := 
    t_0 + \frac{1}{k} \mathrm{atanh}(2 * \frac{\omega - \omega_\mathrm{min}}{\omega_\mathrm{max} - \omega_\mathrm{min}} - 1) \,.
\end{equation}

We then treat $\tilde{t}$ as our diffusion time parameterizing a linear variance-preserving schedule:
\begin{align}
\sigma^2(\tilde{t}) &= \tilde{t} \\ 
\alpha^2(\tilde{t}) &= 1 - \tilde{t} 
,.
\end{align}
We visualize the fit in Figure~\ref{fig:der_vp} and the resulting time reparameterization in Figure~\ref{fig:der_vp_tau}.

\noindent
\begin{minipage}[t]{0.48\textwidth}
  \centering
  \includegraphics[width=\linewidth]{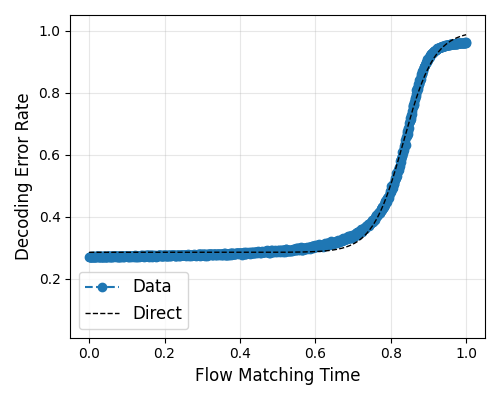}
  \captionof{figure}{Empirical decoding error rate and corresponding sigmoid-tanh fit $\omega_\mathrm{fit}(t)$. 
    }
  \label{fig:der_vp}
\end{minipage}
\hfill
\begin{minipage}[t]{0.48\textwidth}
    \centering
    \includegraphics[width=\linewidth]{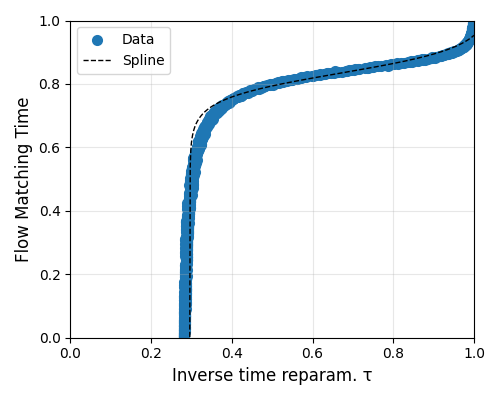}
    \captionof{figure}{Time repararameterization $\tilde{t}(\omega)$}
    \label{fig:der_vp_tau}
\end{minipage}

\subsection{Distillation}\label{sec:additional_distill}

\paragraph{Distillation method}

We tried MeanFlow \citep{geng2024consistencymodelseasy} as well as TVM \citep{zhou2026terminalvelocitymatching} for self-distilling LaDiff into DiLaDiff but did not notice significant differences in our experiments. 
We also experimented with different mean and standard deviation for the LogitNormal distribution used to sample $(t, r)$.
We ended up using MeanFlow with LogitNormal parameters $P_\text{mean}=-1, P_\text{std}=1$.
For completeness, performance is reported in Figure~\ref{fig:meanflow_tvm}.

\begin{figure}[h]
    \centering
    \includegraphics[width=\linewidth]{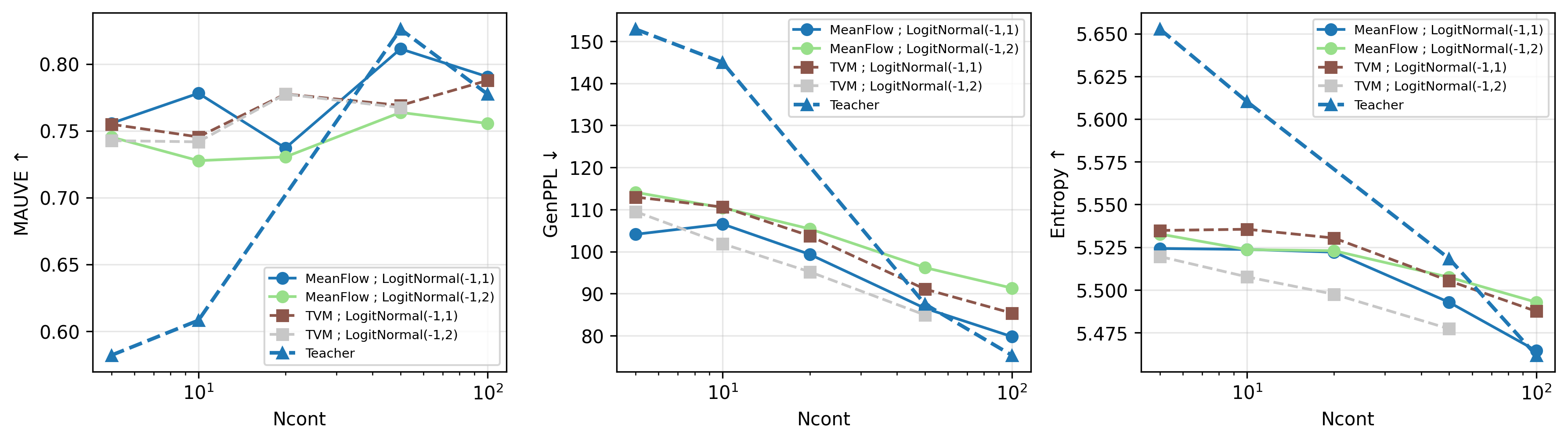}
    \caption{Ablations of self-distillation method for DiLaDiff. $\Nd=64$.}
    \label{fig:meanflow_tvm}
\end{figure}

\paragraph{$\gamma$-sampling}

We ablate the choice of $\gamma$ for stochastic-controlled sampling \citep{kim2024consistencytrajectorymodelslearning} in our few-step DiLaDiff model. The parameter $\gamma \in [0, 1]$ controls the amount of noise re-injected into the prediction:
\[
\left(
  \tau_m \overset{\text{Denoise}}{\longrightarrow}
  \sqrt{1 - \gamma^2} \tau_{m-1} \overset{\text{Noisify}}{\longrightarrow}
  \tau_{m-1}
\right)_{m=1}^{M}
\]

Using $\gamma=0$ yields the deterministic ODE sampler while $\gamma=1$ yields consistency-style stochastic sampling (at each step, jump all the way to clean data then compute the forward diffusion kernel to the next time step $\tau_{m-1}$).
Results for DiLaDiff with $\Nc=5, \Nd=32$ are plotted in Figure~\ref{fig:gamma}.
Surprisingly, entropy decreases when $\gamma$ increases, up to $\gamma=0.8$, above which GenPPL explodes. $\gamma=0.8$ seems to yield the best optimal point for MAUVE, although deviations are minimal compared to MAUVE's noise floor.

\begin{figure}[h]
    \centering
    \includegraphics[width=\linewidth]{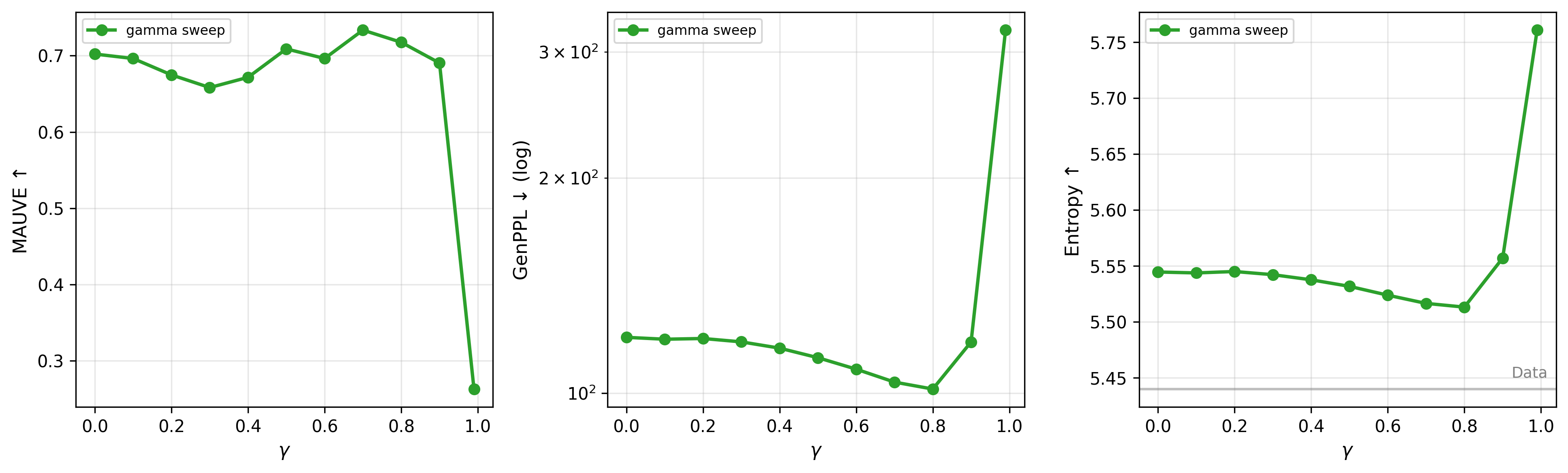}
    \caption{$\gamma$-sampling with DiLaDiff. $\Nc=5, \Nd=32$.}
    \label{fig:gamma}
\end{figure}

\paragraph{Self-conditioning}

We show here our experiments regarding self-conditioning design in DiLaDiff. 
When using an extra MLP head to estimate the self-conditioning input during training, we can either use that prediction at each step, and therefore obtain self-conditioning for free, or compute an extra forward pass to compute $\z_\eta(\z_t, 0) := \Phi\left(\u_\eta(\z_t, t, t), t\right)$, cf. Section~\ref{sec:details_distill}.
We observed that the more sophisticated design using the extra MLP head obtained worse results, which can be due to two independent factors: (\textit{1}) the more challenging training objective, as the student network is being asked to optimize both the MeanFlow loss and the extra regression loss $|| \z_\eta(\z_t, 0) - \z ||_2^2$ using the same backbone architecture; (\textit{2}) the fundamental limitation of previous-step self-conditioning at low NFEs regime.
The latter issue exists for both normal and distilled models, and comes from the mismatch between (\textit{1}) training, where self-conditioning is computed as $\tilde{\mathbf{z}} = \z_\psi(\z_t, t, \emptyset)$ and used to refine the estimation $\z_\psi(\z_t, t, \tilde{\mathbf{z}})$ and (\textit{2}) inference, where the self-conditioning is obtained at the current step $\tilde{\mathbf{z}} = \z_\psi(\z_{t\tau_{m}}, \tau_m, \tilde{\mathbf{z}}_\text{previous})$ and used in the \textit{next step} to compute $\z_\psi \left( \z_{t\tau_{m-1}}, \tau_{m-1}, \tilde{\mathbf{z}} \right)$. In many-step regime, the steps are sufficiently close such that this inference-training mismatch is reasonable, however for few-step generation, this mismatch cannot be neglected.
We show results for the extra-MLP approach, compared to the normal DiLaDiff, on Figure~\ref{fig:vhead}.

\begin{figure}[h]
    \centering
    \includegraphics[width=\linewidth]{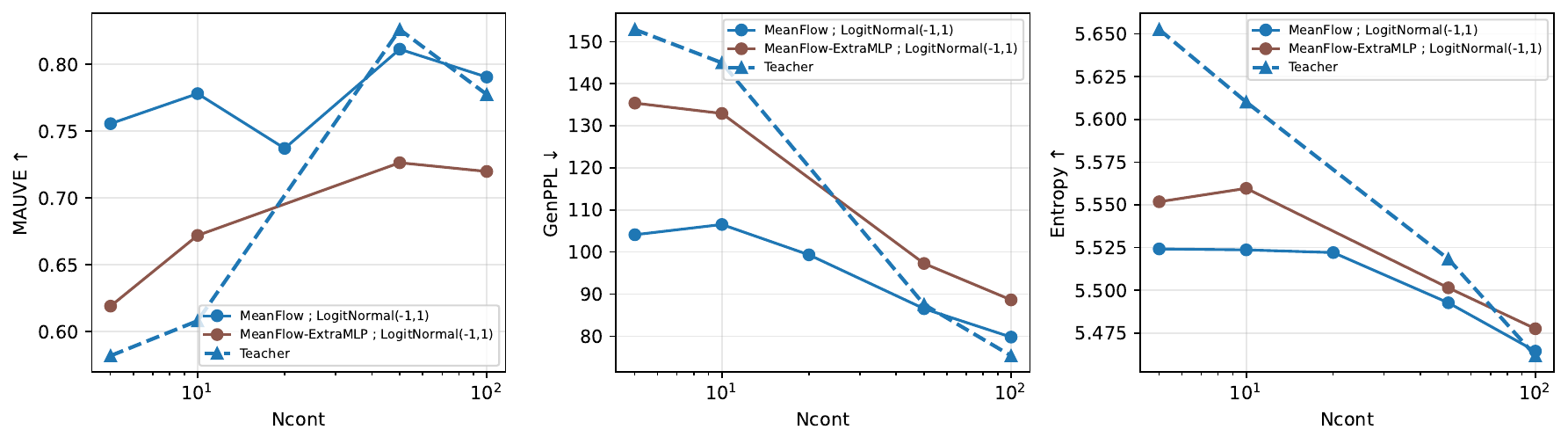}
    \caption{DiLaDiff with/without extra-MLP head for predicting self-conditioning. $\Nc=5, \Nd=64$.}
    \label{fig:vhead}
\end{figure}

\newpage

\section{Baselines}

For various reasons (existing and current state of implementation, choice of the BERT embedding as encoder feature, etc.), we end up using the experimental setup by \citet{meshchaninov2026cosmoscompressedsmoothlatent}, while \citet{sahoo2024simpleeffectivemaskeddiffusion, sahoo2025diffusionduality} use the setup in \citet{sahoo2024simpleeffectivemaskeddiffusion}. 
For transparency, the main differences between the respective setups are listed in Table~\ref{tab:setup_comparison},
and mostly result from the different DiT implementation and different tokenizer (influencing the number of embedding and logit head parameters).

\begin{table}[h]
    \centering
    \scalebox{0.8}{
    \begin{tabular}{lc|c}
        \toprule \midrule
        & Ours & \citet{sahoo2024simpleeffectivemaskeddiffusion} \\
        \midrule
         Architecture & \makecell{Absolute positional embedding \\ QK norm} & \makecell{RoPe \\ (unused) AdaLN} \\
          Tokenizer & \texttt{bert-base-uncased} & \texttt{gpt2-tokenizer} \\
         Vocabulary size & 30, 522 & 50, 257 \\
          Embedding / Logit parameters & 22M & 38M \\
          Transformer parameters & 84M & 92M \\
          Total parameters & 136M & 170M \\
         Denoiser post-processing & Set mask logit to \texttt{-inf} & \makecell{ Set mask logit to \texttt{-inf}. \\ Manually force logits of unmasked tokens to $0$ \\ (for corresponding logit) and \texttt{-inf} (for other logits)} \\
    \end{tabular}
    }
    \caption{\protect\centering Differences between original MDLM baseline and our re-implementation.}
    \label{tab:setup_comparison}
\end{table}

\section{Pareto frontier for batch size 1}

\begin{figure}[H]
    \centering
    \includegraphics[width=\linewidth]{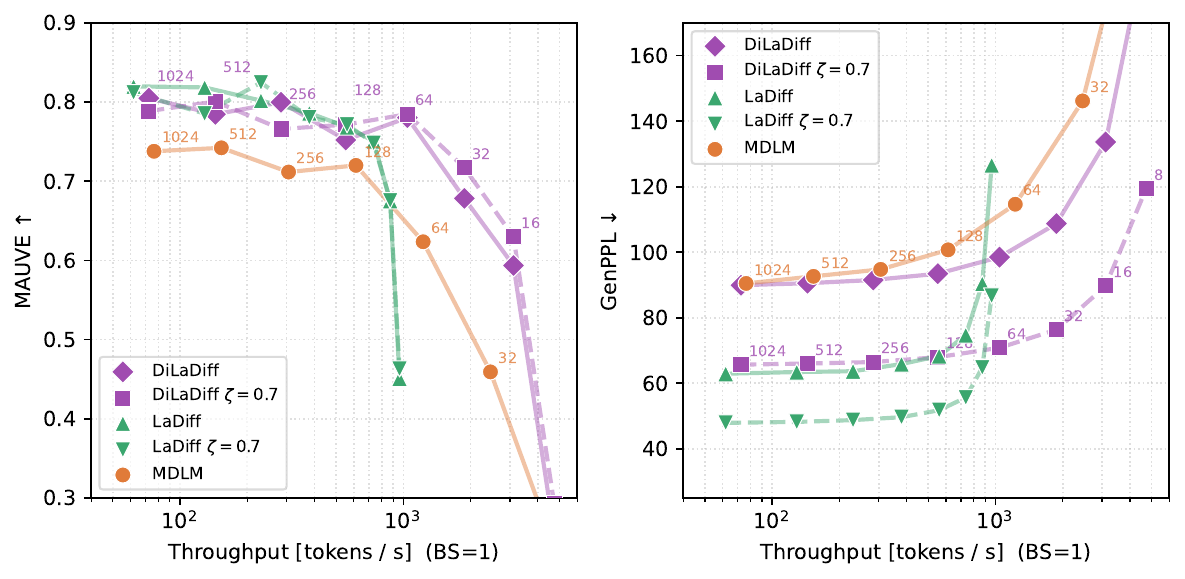}
    \caption{Speed-quality Pareto frontier for batch size $\mathrm{BS} = 1$.}
    \label{fig:pareto1}
\end{figure}

\section{Likelihood computation}
\label{sec:ppl_evaluation}

\subsection{Auto-encoder}
\label{sec:ppl_ae}

We compute perplexity of our auto-encoders using the following expression:

\begin{equation}
\log p(\x) \leq \mathcal{L}_\text{ELBO}^\text{AE} := \mathcal{L}_{\text{mELBO}}^{\text{AE}} + \mathcal{L}_{\mathcal{E}} \,.
\end{equation}

The first term is the modified ELBO corresponding to our training objective (\ref{eq:ae}) with the addition of the original weighting term $\displaystyle{\frac{\dot{\alpha}_t}{1 - \alpha_t}} = -\frac{1}{t}$ in equation (11) in \citet{sahoo2024simpleeffectivemaskeddiffusion}. In practice, we use the approach by \citet{nie2025largelanguagediffusionmodels} which replaces the continuous $\frac{1}{t} \, , t \sim \mathcal{U}(0,1)$ with a discrete $\frac{L}{k}\, , k \sim \mathcal{U}(\{ 1, \dots, L \})$, thereby increasing stability:
\begin{equation} \label{eq:ppl_ae}
    \mathcal{L}_{\text{mELBO}}^{\text{AE}} = 
    \mathbb{E}_{\substack{k \sim \mathcal{U}(\{1, \dots, L \}) \\ \x_k \sim \tilde{q}_k( \x_k | \x )}} 
    \displaystyle{\frac{L}{k}}
\sum_{\ell} 
    \delta\left(\x_k^\ell = M\right)
    \log \langle \x_\theta^\ell(\x_k, \z), \x_k^\ell \rangle \,.
\end{equation}
We slightly abuse notations to signify that the kernel $\tilde{q}_k$ masks exactly $k$ tokens at random positions in $\x$, yielding $\x_k$.
This term is evaluated using 128 Monte Carlo samples, which is enough to obtain a tight and low-variance bound according to \citet{nie2025largelanguagediffusionmodels}.

The encoder entropy vanishes because of our latent standardization procedure, when using uniform dequantization:
\begin{equation}
\mathcal{L}_\mathrm{\mathcal{E}} := \log q^\mathrm{uni}_\phi(\z | \x) = S L \log( \sigma_z ) = 0
\end{equation}

Similarly, we compute the token recovery rate as
\begin{equation}
    \%\text{Recovery} = \mathbb{E}_{\substack{t \sim \mathcal{U}[0,1] \\ \x_t \sim q_t( \x_t | \x )}} 
        \mathbb{E}_{\z \sim \mathcal{E}_\phi (\z | \x) } 
    \mathbb{E}_{\ell} \Big(\delta\left(\x_k^\ell = M\right)  \delta\left(
         \x_\theta^\ell(\x_t, \z) = \xl
        \right) \Big)
\end{equation}

\subsection{LaDiff}
\label{sec:ppl_ladiff}

Although we don't present quantitative results here, we propose the following expression for computing the likelihood of LaDiff:
\begin{equation}
\log p(\x) \leq \mathcal{L}_\text{ELBO}^\text{LaDiff} := \mathcal{L}_\text{PF-ODE}^\text{Latent} + \mathcal{L}_{\text{ELBO}}^{\text{AE}} \,.
\end{equation}
The first term is the true probability-flow ODE likelihood of our latent $\z$:
\begin{equation}
    \mathcal{L}_\mathrm{PF-ODE} := \log p_\theta(\z) = \log p_T(\z_T) + \int_{0}^{T} \nabla \cdot \tilde{\mathbf{z}}_\gamma(\z_t, t) \mathrm{d}t \,,
\end{equation}
which is evaluated according to the protocol in \citet{song2021scorebasedgenerativemodelingstochastic}. The divergence $\nabla \cdot  \tilde{\mathbf{z}}_\gamma(\z_t, t)$ is approximated with help of Hutchinson's trace estimator (with one Rademacher-distributed noise sample) and the PF-ODE is integrated using the Runge-Kutta-45 solver.

The second term is the decoder ELBO in~(\ref{eq:ppl_ae}).

\section{Qualitative samples}

\begin{tcolorbox}[
  enhanced,
  colback=black!3,
  colframe=black!55,
  boxrule=0.6pt,
  arc=2pt,
  boxsep=8pt,
  left=12pt, right=12pt, top=8pt, bottom=12pt,
  toptitle=10pt,
  bottomtitle=8pt,
  title={%
    \textbf{LaDiff}\quad
    $\Nc=200$,\quad $\Nd=1024$\quad
    $\cdot$\quad
    \textbf{GenPPL:}~62.9\quad
    $\cdot$\quad
    \textbf{Entropy:}~5.40
  },
  fonttitle=\small\sffamily,
  coltitle=black,
  colbacktitle=black!10,
  titlerule=0.5pt,
]
\small\setlength{\parskip}{0.45em}\setlength{\parindent}{0pt}
\raggedright

in 18 months after seeing her first deer strike about 100 kilometres from Corbrooke Avenue in Ontario. I think it's a bit cliche to think there are members of the public out there that are actually advocating against animal control. ``It's just a small box, for sure, but it's just a crime. They can't keep them in themselves. They went there, it was horrible, but it was hard to think they could move it.'' Heather McIlgain, director of the Canadian Society of Deer and Hunters, a group that has fought against animal killings, said unusual cases are happening. She said the Canadian government has targeted some wild animals, some of which are heading to Gatehead, and the endangered animals are being being held in animal killing investigations. The association's study, combined with a collaboration with both Bell Region Public Safety and the NPP, found 456 animal sightings over the past year, mostly with government-led governments and provincial agencies that ban animal killing. ``I don't know why the owner has killed an endangered animal,'' Gravelsome said. ``There's no question about the facts. `We are still working out which person is responsible, is this accident or non-profit,' Gravelsome said.'' ``We've seen a lot of wild animals killed again over time. Even if they are not responsible for all animal killing investigations in the country it doesn't even seem to assess the welfare of the animals that are being examined.'' ``The Government of Ottawa is funding assistance to all government agencies and the private sector to close the animal slaughter industry in Canada, fully shut down government offices and securely execute the rescue of farm animals.'' McIlgain said while she hasn't opened a door open an inquiry into how federal agencies are looking at animals, she is also asking for information about livestock and others who are planning on killing the animals. ``We were just looking at farm animals. We saw a bunch of small mags, they came out of somewhere where they had stepped in,'' she said. ``Some of the those animals had infected and turned up in traffic, and I brought up the animal, and then by laying it off, they started to move much bigger people. Then these bull lions came in, a bit bit bigger and made for thousands of clicks.'' She said the woman was seeing a herd of bull lions and a group of animals grazing on deer. ``We pushed them several yards down the highway, so there was a bush that seemed to be circling up around.'' ``Then we had a stump. A cule Without a stump, the cule was hard to spot,'' she said. Once the woman finished up the animal, the deer became injured. ``It's hard to know what to see,'' she said. ``He had a gun and I just held it, but it could easily pick one with me. Eventually the woman grabbed some of the deer and John was able to keep the animals in hand,'' she said. ``LEASEN WORNER / FOX News Post-Morning Nurse Wayne Smith was injured when she was hit by a car in her 100-week nursing home, ABC News spoke to Cun Clhree on Monday, standing motionless at his home, where she also `collided with her deputy dog' on Tuesday morning.'' ``He kept mum about not telling me his his story,'' Clhree said. The personal caretaker told ABC News. ``He's very hurt. In fact, you can't tell him how much damage he has done.'' Doctors are still deciding how exactly that is affected, but her apparently 50{,}000 year-old animal is in competent mental condition. He'll work all days, being bitten, sometimes twice a day of the week. ``(Danielle) is recovering physically and mentally now and he doesn't even get to his care facility. He is sitting and I could have been an accident,'' said Clhree. ``He's just so fragile,'' she said. ``As soon as he feels it, he's not getting injured. He's getting hurt that much and he's back there.'' ``Often, when people feel like someone has hit on something important, it's really not all they are.'' Another child, injured Monday in her 18-week-old nursing car, made the decision courtesy of Post-Morning. The child is unhappy because of her concerns, particularly the parents, who wanted to speak publicly about the issue this week. Students were to attend another school Sunday, about this week as Danielle was very good at her pet bullies and vannalism, and they also applied for custody of her mother. Topics: zoo-reality, zoo-reality, zoo, maaaa-8162, 5078-261, northern-2000, australia First Javascript

\end{tcolorbox}

\clearpage 


\begin{tcolorbox}[
  enhanced,
  colback=black!3,
  colframe=black!55,
  boxrule=0.6pt,
  arc=2pt,
  boxsep=8pt,
  left=12pt, right=12pt, top=8pt, bottom=12pt,
  toptitle=10pt,
  bottomtitle=8pt,
  title={%
    \textbf{LaDiff}\quad
    $\Nc=200$,\quad $\Nd=64$\quad
    $\cdot$\quad
    \textbf{GenPPL:}~68.3\quad
    $\cdot$\quad
    \textbf{Entropy:}~5.40
  },
  fonttitle=\small\sffamily,
  coltitle=black,
  colbacktitle=black!10,
  titlerule=0.5pt,
]
\small\setlength{\parskip}{0.45em}\setlength{\parindent}{0pt}
\raggedright

``and I look forward to seeing that opportunity.'' Season resolved Schumacher, who earned his second finish in the TRT this season, as he won the race at the age of 13. ``I was introduced from the start of the season and had a very bad weather. When I started training, I kicked off at Turn 6 and then dropped off a little but didn't know what else to do. I started training a little bit and was really very excited to race the next race. Now in the TRT is an amazing opportunity to continue in my four years in the RRA.'' ``Ferrari will resume a Formula 5 TRA campaign in the new Rosso team, led by Mauricio Quesadeso (NP) who raced for 24 Spa from 2007 to 2012, winning victories in La Island and 21 Spa in the TR.'' ``I'm very happy with the results, so I'm really looking forward to back racing after the race next year,'' he said of ahead of the 2017 F1 season. ``It's a great time to get the start of the new season, but it brings a value to my team and I hope the will continue to continue in Le Mans.'' Overall, although with a bit of hard work, Ferrari returns to the form and form with France-based \"{O}rrama Ferrari FTS---TRT, \$10{,}000 (TRAMA MEDIO). His 4th podium finish in the TRT Championship this season Total Shares \$514{,}419 Goldman Sachs (Deutsche Bank ETF ET) SWIFT---SD) \$39{,}599. General Counsel (NASDA), (OR) \$524{,}614. Morgan Investments (MSFT) Goldman Sachs (AIRN) 2, 16 Chasegan Morgan Co (DHL) 12 Goldman Sachs Group (TB) 13. Morgan Stanley Capital Management (JPIF) 14 JPMorgan Chase Company (JPY) 17, 23. General Management (MSGI) 20 SwissGrann (SPDR) Wells Fargo Financial Management (FNCE) 47, 17 Goldman Sachs Accultatim Future (FINAL,) \$40{,}000--\$55{,}100{,}000. Citi Asset Management (BBBM) 1, 21 Wall Street Financial \& Dividends Inc. \$100{,}003. Morgan Stanley INK (DAK) 13. Fargo ETF (TES) 18 Morgan Stanley \& Co BYC Companies (TTX) 20 JPMorgan Chase Accultati ((ADK) 13, 25. Bank of America (U.P.) 24 JP Morgan Chase Morgan Co (TB) 26 Bank of America Group (CSA) Alphabet Holding (TAG) 18, 28 Goldman Sachs Group (GK) Morgan Stanley (TK) 24 Morgan Stanley Morgan Group (TLD) 22 Morgan Stanley Global Markets (TE) 23. Morgan Stanley Banking Group (TA) 23 Standard TLDs Vanguard Goldman Morgan Group (MS) 25 Goldman Sachs Group (I) JPMorgan Morgan Co (INC) + 9, 49 JP Morgan Chase \& Co (TLDF) 9, 29 The Bank of America ((BUDAQ)) Goldman Sachs \& Co (NYSE: M) Then again, we have more data in data from the past results. The 23 stocks are by a third. The 23 funds are frontier-fund funds. Here are mature funds. We show how many companies have signed their companies as member companies. Goldman Sachs American Life Inc (NASDAQ) TLDs. First, we show ``pioneer-pay'' funds. (Jeffrey Ackman, DLOM) Morgan Stanley ETF + ((AD)) Liberty Mutual Inc (METR) Mark Morris America, President of Alpha Foundation GEE ``7.5\% of Morgan Stanley is the largest investment fund,'' as we noted in the post-Alpha US Investment Fund report. We've looked for investor funds in each index based on the most recent hedge funds. Secondly, we've only used this parameter in our initial analysis (published under SPF). Thirdly, we see ``pioneer--pay'' stock funds. Next, we look at those ``legacy'' funds. We name those funds. So here are the patterns for investors---for some reason: 1) There's 150 investors plus a top tier investor per step vs.\ algorithm. Our algorithm carries a fixed list price. Next, we choose confidence in trading. Since moving down, some investors will move lower. Weeks 2, 3, 5 ``move down,'' stocks are more than 50, but by Weeks 30, investors move up. So, while we've picked the SPF over the last 4 years, we don't want investors to agree on our data and the SPF shows a good, optimistic performance. Below is the time, after Vanguard Capital Partners launched

\end{tcolorbox}

\clearpage 


\begin{tcolorbox}[
  enhanced,
  colback=black!3,
  colframe=black!55,
  boxrule=0.6pt,
  arc=2pt,
  boxsep=8pt,
  left=12pt, right=12pt, top=8pt, bottom=12pt,
  toptitle=10pt,
  bottomtitle=8pt,
  title={%
    \textbf{MDLM}\quad
    $\Nd=1024$\quad
    $\cdot$\quad
    \textbf{GenPPL:}~90.5\quad
    $\cdot$\quad
    \textbf{Entropy:}~5.50
  },
  fonttitle=\small\sffamily,
  coltitle=black,
  colbacktitle=black!10,
  titlerule=0.5pt,
]
\small\setlength{\parskip}{0.45em}\setlength{\parindent}{0pt}
\raggedright

I was investing heavily in gold due to my digital investment until recently, now many of the biggest stock bubbles investing in the world could afford to purchase. Back then we were willing to sell and to use them to buy coins. Though we did have a lot of people buying Gold coins, there are some big problems that changed people's minds. Some new crypto-trading contracts were invented back in the day there were short amounts to short selling cycles, the BO holders get much lower price than Bitcoin price. I bet most today we are very interested in coins. I assume all of cheap stuff instantly. When we invest in fiat today we got very high, but at the lowest price in any other priced piece of history. You would give yourself time to ask why gold today. Never have you be surprised if you bought gold. Have you ever invested \$1 in fiat? It would be worth it if you most likely will never buy it. Today we are only in 30-something and many feel so about it. Often it counterfeit sell instantly, but value can be much higher than there ever. One of the biggest issues I discuss is the possibility of giving the money at a central bank judging the quality of the virtual currency price. I do not believe the cryptocurrency is the point to which we understand and buy makes a lot of tokens. This is true for any silver we have at all. A lot of that Commo-pricing To put it another way, a savings of \$9.99 billion in annual costs per gold by making the US economy value it a lot more attractive. A major key factor in my family was earning to a BTC. The novelty of having a real gold coin was mined out of one dollar amount. This is a major change when people who rely for money on fake economic markets, it very often means fraud can get took out of the system. People buy and take virtual tokens the way they accept the money or not. They will still most likely not pay you. Now I know all about that. Dattaby, it was called, is billed as the biggest event in cycling's history. The race, known as Dattaby, will launch on 13, resulting in more than 25{,}000 journeys. The online stage chain Dattaby Tour Organizers has selected SportStadia, DeutschesStadia, and Hy\"{a}ntra for over funding paid for taking part in the event. The special event will be expanded at the second phase and planned in a third. Next year a consortium of major airports will run on London between Channel 1 and Biao Town USA, which will open three days in two phases for spring months later. ``At the end of the day, cyclists ride an atmosphere of than 2{,}000 miles of wild bikes and it will bring unprecedented excitement to Europe,'' said CEO Gerry O'Sullivan of CyclingCubb. ``A national tour of Cycling is always on us. This is to be the first event in the world to take the road from London to coast back on Super France 23 for the June 21 Addero Weekend.'' Dattawa organisers added: ``We are amazed by what all the elite cyclists will be on the three-race journey to see as the race moves towards Europe, and nothing beats a day travelling out to a bloke or the morning's coffee.'' ``Pro cyclists from across the world will have the opportunity to take in their cycling tours with non progenitor members all over the world, and the food takeaway will become part of the thrilling food. Exhilarating accommodations along with the backbone of our cycling infrastructure, this Dattaby will put all the importance of our highly motored cycling efforts into the event. We have invited some fantastic teams together throughout our careers to compete throughout the route internationally that we are invited to compete in. We are delighted to see all cyclists available today from June, concentrating in part on keeping an authentic approach to the sport.'' Throughout the season there will be seven-star hotels, while hotel rooms will be used by a nighttime range of luxury, racing and fitness options. Other events happen at times in London on the same day during the season. Share this article Print Facebook Twitter Email Share this article to your email address. Show Email Click a link to the Sutton Trusty Watson Watson is demanding full access with their Travel Drivers Service. Officers have targeted people in the wake of death of a man suspected of allegedly trying to connect Uber with Uber. Dozens of arrests during Beesworth Road and Rigswell Road in Waterloo area were received today. Their own company is now not providing an investigation into their activities with the public but will be providing a drive all the way through to prepare a ticket to the event. said CEO The team is proud of our classic. By helping to organise our event, we will no doubt celebrate an elite cycling coach working in America. It

\end{tcolorbox}

\clearpage 


\begin{tcolorbox}[
  enhanced,
  colback=black!3,
  colframe=black!55,
  boxrule=0.6pt,
  arc=2pt,
  boxsep=8pt,
  left=12pt, right=12pt, top=8pt, bottom=12pt,
  toptitle=10pt,
  bottomtitle=8pt,
  title={%
    \textbf{DiLaDiff}\quad
    $\Nc=5$,\quad $\Nd=64$\quad
    $\cdot$\quad
    \textbf{GenPPL:}~97.2\quad
    $\cdot$\quad
    \textbf{Entropy:}~5.50
  },
  fonttitle=\small\sffamily,
  coltitle=black,
  colbacktitle=black!10,
  titlerule=0.5pt,
]
\small\setlength{\parskip}{0.45em}\setlength{\parindent}{0pt}
\raggedright

nature of the comment, a speech made by Jay Leno when he split in their favour. Alan agreed in his own, of course, of winning his votes (the chart chart shows above). Nevertheless, he was able to achieve just argumentation, with no general exertation or action from class. Nonetheless, Apeball was clear that he was simply suggesting rather than implementing, to say, Wall StreetNet is attempting to force all all its members to negotiate the TLDR \$1.3 trillion tax accord to GNU senators at levels far exceed the goals the party hopes voters to reach. Furthermore, it is logical that the elaborate agreement would enable the Republican to pursue a second deal through Common Core while including millions of people paying taxes, placing thousands of dollars on allegedly fraudulent schemes on their tax rolls and even as unlikely candidates for the House. At approximately the same time, the commentspunge upon the decision of the party's four chapters, who according to Wall StreetNet, declared that he disassociated, shortly after, he had lied from a pre-peace and talks about a tax deal, ``looked at as a step into of various Heavens.'' Quite reasonable gesture for the group, although it is clear that the organization is now threatening to exit the tea party's contract with his allies, he has had no right to give up of the accounts during the 2014 elections. ``All the members are already publicly acknowledging the alleged `overspending' and beginning to pretend it is an accounting,'' Folder adds. The party is actually now threatening to go ahead with the plan by using spin a nebbusbitividiensis in principle as the payer in Roberto Aamer's finances. Simply put, a group of wealthy interests would acquire 5.99 millionaire shares in bonds to cover either a bill or raise its annual deficit. At that point, the on their affairs would continue from President Trumanes to the Cayman Islands, thus reducing the risk. Apparently they would have to go away if Aamer believes he is in the group with his latest proposal. The details have been denied continued negotiations. Ron Paul A.\ Hycock: It has been reported that the Fitz Caucus group is in contact with a giant group representing some of the voices demanding transparency in the budget: budget cuts, a border wall, and a U.S.\ dollar, or part of which to House President China would cost about \$1.3 trillion. Media Resources: This is the view of the House Judiciary Committee on A Comprehensive Firm to De-Define the USary Sessions Report: This featured article is written by Georgia Liberty Civil Society Supreme Court Judge George C.\ Francis O.\ Lindsay. The cannabis coalition is lauded by online news outlets, covform, and local events, and constantly feeds by Christianity's support of the elections. Naturally follows the conservative Radio host's outrage over the ``absurd'' decision to stop bandanna Taylor Swift taking on ``booing and rude shouting,'' the Thread in Limbaugh's host and podcast hosts, Bobby Pugh, canceled his radio show. Host Carney fired back from the offensive by complaining that the remark ``offensive,'' a cringely demeans former rock singer Ceremonian Swift. Jordan Levivin, co-chairman of Jimmy Limmel, said Pelley had already had informal conversations with Pelley, two days after the remark, which began angering him on social media: ``The goal is of a marriage equality agenda, for the party, to let people know what he she believed. This is essentially a gargantuan political push push for the failed grassroots Democratic Party on marriage---that is literally the end of one's life, rather than a legitimate overgovernment. According to Opinion and other commenters, Plante appears on his Facebook to show himself a `champion of civility' and according to a panel of leaders, `there is sensitivity on the price of food.''' Melania Annouched against the remark, saying, ``They're not talking about political unity, they're not going to say they're talking about at all.'' Ironically, two prominent members of Limbaugh Radio's network, Beck, stood to--face over the comment when he told Hannity that she shared the tweet publicly. ``It was disgusting to hear Carney say anything like that in front of me,'' Beck said. ``Clearly not.'' That apparently provoked the singer to engage his own theory of feminism, with Swift's remarks he wanted a woman to have a life, after Hannity pushed back on the offensive night. Moments after the conversation took place, pundits on the left quickly pointed to a few ``conservative'' chat rooms who allegedly poured gasoline on Carney's air on social, and blasted Hannity's claim they're liberal, because he supposedly opposed Swift.

\end{tcolorbox}

\end{document}